\documentclass{article}

\PassOptionsToPackage{numbers,compress}{natbib}
\usepackage{PRIMEarxiv}
\usepackage[utf8]{inputenc} % allow utf-8 input
\usepackage[T1]{fontenc}    % use 8-bit T1 fonts
\usepackage{hyperref}       % hyperlinks
\usepackage{url}            % simple URL typesetting
\usepackage{booktabs}       % professional-quality tables
\usepackage{amsfonts}       % blackboard math symbols
\usepackage{nicefrac}       % compact symbols for 1/2, etc.
\usepackage{microtype}      % microtypography
\usepackage{lipsum}
\usepackage{fancyhdr}       % header
\usepackage{graphicx}       % graphics
\usepackage{authblk}

\usepackage[table]{xcolor}
\usepackage{cite}
\usepackage{adjustbox}
\usepackage{amsmath,amssymb,amsfonts}
\usepackage{algorithmic}
\usepackage{graphicx}
\usepackage{textcomp}
\usepackage{booktabs}
\usepackage[utf8]{inputenc}
\usepackage[T1]{fontenc}
\usepackage{hyperref}
\usepackage{url}
\usepackage{booktabs}
\usepackage{amsfonts}
\usepackage{nicefrac}
\usepackage{microtype}
\usepackage{xcolor}
\usepackage{amssymb}
\usepackage{amsmath}
\usepackage{tcolorbox}
\usepackage{placeins}
\usepackage{multirow}
\usepackage{algorithm}
\usepackage{algorithmic}
\usepackage{amssymb}
\usepackage{amsmath}
\usepackage{amsthm}           % add this line
\usepackage[table]{xcolor}   % ← add this line
\usepackage{booktabs}
\usepackage{wrapfig,lipsum}
\newtheorem{proposition}{Proposition}

\graphicspath{{media/}}     % organize your images and other figures under media/ folder

\title{Solving the Needle-in-a-Haystack Problem in Mammography Vision–Language Model with Differentiable Subset Sampling}

\author{Young Seok Jeon}
\author{Beatrice Brown-Mulry}
\author{Rohan Satya Isaac}
\author{Anjana Dissanayaka}
\author{Frank Li}
\author{Theo Dapamede}
\author{Mohammadreza Chavoshi}
\author{Judy Gichoya}
\author{Hari Trivedi$^{*}$}

\affil{Department of Radiology, Emory University, Atlanta, GA, USA}

\date{}
\begin{document}

\maketitle

\begingroup
\renewcommand{\thefootnote}{*}
\footnotetext{Corresponding Author: Hari Trivedi, <\texttt{hari.trivedi@emory.edu}>}
\endgroup

\begin{abstract}
%Background
There is growing interest in adopting CLIP-style vision--language model (VLM) pretraining for mammography. However, models that directly employ the standard CLIP architecture and training objective exhibit limited zero-shot performance in clinically important tasks such as cancer, finding-type, and BI-RADS predictions.
% hypothesis: 1) low-res, 2) contrastive loss
We argue that this underwhelming performance is due to neglecting two characteristics of mammography data: \textbf{(1)}~its high-res nature, and \textbf{(2)}~homogeneity of radiology reports, largely driven by a predominance of negative/benign findings on examinations.
% method: TopKSigLIP address the two problems
We propose \textbf{TopKSigLIP}, a VLM designed to address these two limitations through a novel architecture and learning objectives.
% method : 1) dypatch
Instead of downscaling high-res mammography images to satisfy GPU memory constraints, TopKSigLIP introduces \textbf{TopK-Patch} module that learns to sample a sparse set of high-res patches likely to contain lesions, sidestepping the resolution--batch size tradeoff of VLM training. The sampled patch locations additionally serve as a built-in localization tool.
% method : 1) sup-sigmoid
To address report homogeneity, we replace the contrastive loss, which falsely repels semantically similar pairs, with a \textbf{Sup-sigmoid} loss. Sup-sigmoid loss extends the sigmoid loss from SigLIP with soft labels derived from structured data.
% result: accurate, interpretable
TopKSigLIP outperforms existing open-source mammography and general medical VLMs on both internal and external benchmarks on density assessment, BI-RADS classification, finding subtyping, and cancer prediction under zero-shot evaluation. TopKSigLIP remains competitive under linear probing despite using a significantly smaller vision encoder and smaller training batches than baselines. The TopK-Patch module additionally achieves superior lesion localization over post-hoc Grad-CAM. Code and weights are made public~\footnote{\texttt{https://github.com/Youngseok0001/TopKSigLIP}}
\end{abstract}

\section{Introduction}
\label{sec:intro}
% why CLIP is good and its sucessful adoption in various medical imaging modalitites.
CLIP-style VLM pretraining~\cite{radford2021learningtransferablevisualmodels,zhai2022litzeroshottransferlockedimage,li2023blip} aligns images and text in a shared embedding space via contrastive learning~\cite{oord2019representationlearningcontrastivepredictive}, yielding transferable representations for various downstream tasks such as image generation~\cite{ramesh2022hierarchical}, visual instruction tuning~\cite{liu2023visualinstructiontuning}, open-vocabulary 
detection~\cite{gu2022openvocabularyobjectdetectionvision}, and segmentation~\cite{kirillov2023segment}.
This paradigm has been widely adopted in medical imaging, including chest radiography~\cite{zhang2022contrastivelearningmedicalvisual,wang2022medclipcontrastivelearningunpaired}, abdominal CT~\cite{Blankemeier_2026}, and pathology~\cite{ding2025multimodal}, as well as recent efforts to unify modalities within a single framework~\cite{codella2024medimageinsightopensourceembeddingmodel,sellergren2025medgemma}. Most works adhere closely to the original CLIP formulation with minimal modification, yet already achieve strong performance on downstream tasks such as medical VQA~\cite{zhang2024development,li2023llava,liu2025gemexlargescalegroundableexplainable} and segmentation~\cite{Liu_2023}.

% existing mammography VLMs' naive CLIP adoption with novelty limited to the multi-view input to vision encoder.
There has been growing interest in training VLMs for digital mammography~\cite{ghosh2024mammoclipvisionlanguagefoundation,chen2024mammoclipleveragingcontrastivelanguageimage,du2025multiviewmultiscalealignmentcontrastive,DuYue_GeometryGuided_MICCAI2025,ghosh2025mammofmbreastspecificfoundationalmodel}, a clinically vital modality where breast cancer accounts for approximately 42,000 deaths per year in the United States alone~\cite{siegel2018cancer} and regular screening has been shown to reduce mortality by 38--48\%~\cite{gotzsche2013screening,broeders2012impact}.
A defining characteristic of mammography is its multi-view nature: each exam typically includes four images—two views (craniocaudal (CC), and mediolateral oblique, (MLO)) per breast—interpreted jointly to produce a single report. To date, most mammography VLMs have focused almost exclusively on this multi-view characteristic, using view-specific encoders~\cite{chen2024mammoclipleveragingcontrastivelanguageimage,du2025multiviewmultiscalealignmentcontrastive,ghosh2025mammofmbreastspecificfoundationalmodel} and cross-view correspondence learning~\cite{DuYue_GeometryGuided_MICCAI2025}. However, this multi-view challenge is only part of the problem. We identify two more fundamental limitations overlooked by prior work:

\begin{figure}[t]
     \centering
         \includegraphics[width=1.0\textwidth]{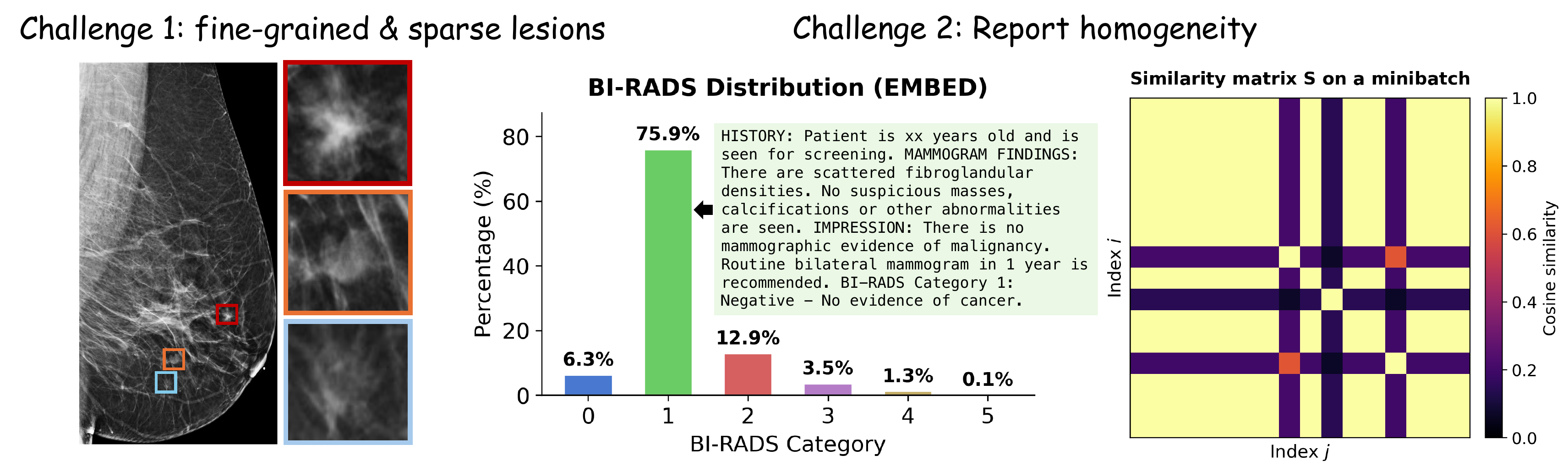}
{\caption{Existing mammography VLMs overlook two key properties of mammography data. \textbf{(1) High resolution with sparse findings:} mammograms contain small, sparsely located lesions that are easily destroyed by downsampling, yet retaining full resolution forces a prohibitive batch size tradeoff. \textbf{(2) Report homogeneity:} the screening-dominant nature of mammography produces highly similar reports, causing the contrastive loss to falsely repel semantically similar image--report pairs.}}
\label{fig:motivation}
\end{figure}

\paragraph{Challenge 1: High resolution and sparse findings.}
% % small 
Mammograms are acquired at much higher resolutions than the natural images used in standard CLIP training ($224^2$–$512^2$ pixels), often reaching over 12 megapixels. This resolution is clinically important (Fig.~\ref{fig:motivation}(a)), as many lesions in breasts are often small (<5\% of the image) or obscured by tissue overlaps, especially in dense breasts~\cite{boyd2007mammographic}. Downsampling risks erasing these fine-grained features, yet directly applying the standard CLIP framework at full resolution is not viable: the resulting reduction in batch size increases training time and can degrade performance, since contrastive loss benefits from large-batch training~\cite{radford2021learningtransferablevisualmodels,chen2020simpleframeworkcontrastivelearning,he2020momentumcontrastunsupervisedvisual}.
% % sparse
Furthermore, the lesions are not only small but also sparse, with a typical exam containing only a handful when present. The mammography report, which we aim to align with the mammogram images, primarily describes these findings. This differs from the image--caption pairs used in standard CLIP pretraining, where captions often describe clearly visible objects in images. If  lesion locations are known in advance, the majority part of the image can be safely discarded, retaining only regions described in the report. A potential approach is a two-step pipeline: (1) a Region Proposal Network~\cite{ren2016fasterrcnnrealtimeobject} to localize findings, followed by (2) CLIP pretraining on the selected patches. However, this is impractical for most mammography datasets~\cite{jeong2023emory,nguyen2023vindrmammolargescalebenchmarkdataset,rsna2022breastcancer}, where spatial annotations are largely unavailable or insufficient to train a reliable proposal network.

\paragraph{Challenge 2: Report homogeneity.}
In the US, women are recommended to undergo annual screening mammography beginning at age 40~\cite{uspstf2024breast}, with approximately 90\% of exams yielding normal or benign findings. For example, in the EMBED dataset~\cite{jeong2023emory}, 75\% of exams are BI-RADS 1 (Negative), and these reports differ only in density and demographics (Fig.~\ref{fig:motivation}(b)). In contrast, standard CLIP training data~\cite{schuhmann2022laion5bopenlargescaledataset,Thomee_2016} exhibits high caption diversity, and the contrastive loss is specifically designed to exploit this: the softmax normalization, paired with an identity ground-truth matrix, assumes each caption is semantically distinct from all others in the batch. This assumption is violated in mammography, where reports are largely homogeneous. As Fig.~\ref{fig:motivation}(b) illustrates, the semantic similarity of reports in a minibatch is pervasive (see Sec.~\ref{sec:soft-sigmoid} for the derivation of similarity matrix), causing the contrastive loss to falsely repel semantically similar image–report pairs. While prior work has addressed related issues via supervised~\cite{khosla2020supervised,yang2022unified,wang2022medclip} and hard-negative mining approaches~\cite{robinson2021contrastivelearninghardnegative,kalantidis2020hard}, they are not tailored to this extreme low-diversity setting, where the fixes are insufficient.

\paragraph{Contributions.}

% method contribution
We propose \textbf{TopKSigLIP}, a VLM for mammography that addresses two underexplored challenges.
To handle high resolution and sparse findings, we introduce \textbf{TopK-Patch}, a differentiable top-$k$ patch selection module that predicts sampling scores from low-res inputs and selects $K$ high-res patches likely to contain lesions. This preserves fine-grained detail while discarding irrelevant regions, and the learned scores/locations provide built-in localization capability, replacing post-hoc methods such as Grad-CAM~\cite{selvaraju2016grad}.
To address report homogeneity, we propose a \textbf{Sup-sigmoid} loss, which extends the sigmoid loss from SigLIP~\cite{zhai2023sigmoid} with soft labels derived from structured clinical data (e.g., MagView~\cite{magview2026}). While the sigmoid loss was originally introduced to reduce the computational burden of the contrastive loss, we repurpose it to mitigate a training bias: softmax-based supervised contrastive objectives~\cite{wang2022medclip,yang2022unified} force each pair to compete against all others in the batch, even when most are semantically similar, as is the case in mammography. The sigmoid formulation instead treats each pair independently, alleviating this unnecessary competition.

% result contribution
Trained on a subset of the EMBED dataset~\cite{jeong2023emory}, TopKSigLIP achieves state-of-the-art performance on internal and external benchmarks (VinDr~\cite{nguyen2023vindrmammolargescalebenchmarkdataset}, RSNA~\cite{rsna2022breastcancer}) across BI-RADS classification, density estimation, lesion subtyping, and cancer prediction, in both zero-shot and linear probing settings. TopK-Patch enables memory-efficient training with large batch sizes on a single GPU even at full resolution, while its built-in localization outperforms Grad-CAM for lesion detection.

\section{Related Works}
\textbf{Mammography VLMs.} Two independently proposed Mammo-CLIP models~\cite{ghosh2024mammoclipvisionlanguagefoundation,chen2024mammoclipleveragingcontrastivelanguageimage} mark the first VLM efforts for mammography. Chen et al.~\cite{chen2024mammoclipleveragingcontrastivelanguageimage} aggregate all four views into a single exam-level embedding, whereas Ghosh et al.~\cite{ghosh2024mammoclipvisionlanguagefoundation} operate on a single view and introduce Mammo-FActOR for post-hoc lesion localization. MaMA~\cite{du2025multiviewmultiscalealignmentcontrastive} leverages two views (CC and MLO) with a local contrastive loss for fine-grained features, and GLAM~\cite{DuYue_GeometryGuided_MICCAI2025} incorporates cross-view attention. More recently, Ghosh et al.~\cite{ghosh2024mammoclipvisionlanguagefoundation} extend this line to report generation~\cite{ghosh2025mammofmbreastspecificfoundationalmodel}. However, these works primarily address multi-view learning and do not tackle the high-res or report homogeneity challenges we target.
\textbf{Differentiable top-$k$ learning.}
Top-$k$ sampling, the task of stochastically selecting a subset of size $K$ from a collection based on learned importance scores, can be made differentiable through several approaches. One uses entropy-regularized optimal transport~\cite{cuturi2013sinkhorn}, treating the collection as source and the top-$k$ indicator as target~\cite{xie2020differentiable}. Another relies on perturbed optimization~\cite{berthet2020learning,cordonnier2021differentiable}, adding noise to a linear program and approximating gradients via Monte Carlo. We adopt a Gumbel-softmax approach~\cite{kool2019stochastic,xie2019reparameterizable,jeon2025no} built on weighted reservoir sampling~\cite{efraimidis2006weighted}.
\textbf{CLIP alternatives.}
While CLIP~\cite{radford2021learningtransferablevisualmodels} is a powerful VLM pretraining paradigm, it suffers from penalizing semantically similar pairs as negatives and large memory footprints. To mitigate this, MedCLIP~\cite{wang2022medclip} and UniCL~\cite{yang2022unified}, inspired by supervised contrastive learning~\cite{khosla2020supervised}, augment the identity ground-truth similarity matrix with soft/hard targets derived from raw text or tabular data. In contrast, training efficiency from large memory footprint has received less attention: prior work either freezes the image encoder~\cite{zhai2022lit} or replaces the softmax in the contrastive loss with a sigmoid~\cite{zhai2023sigmoid,tschannen2025siglip}, but none address the memory bottleneck of high-res inputs in a fully end-to-end trainable setting.

\begin{figure}[t]
     \centering
         \includegraphics[width=0.99\textwidth]{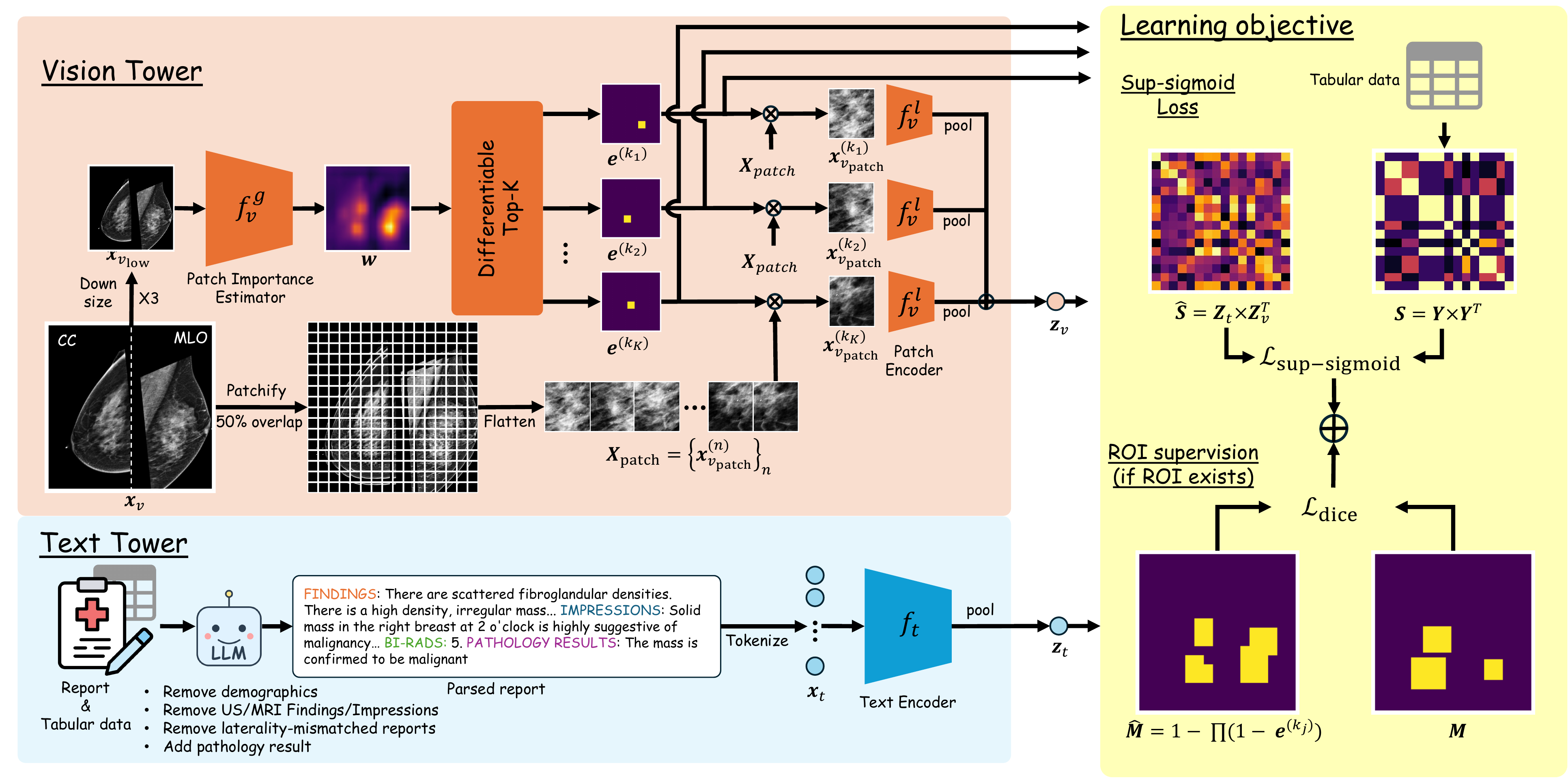}
{\caption{\textbf{Overall training pipeline of TopKSigLIP.} The model comprises a Vision Tower, which selects $K$ high-res patches that contain suspicious lesions via a TopK-Patch module based on a low-res image, and a Text Tower, which encodes LLM-preprocessed radiology reports into text embeddings. The image and text embeddings are jointly optimized using three objectives: a Sup-sigmoid loss supervised by tabular-derived semantic similarity and ROI supervision via Soft Dice when bounding-box annotations are available.}
}
\label{fig:main}
\end{figure}

\section{Method}

\subsection{Overview}
TopKSigLIP (Fig.~\ref{fig:main}) builds on the standard CLIP with two key modifications: the image tower and the learning objective. Given a minibatch of image--report pairs $\{(\mathbf{x}_v^{(i)} \in \mathbb{R}^{H \times W},\ \mathbf{x}_t^{(i)} \in \mathbb{Z}^{L})\}_{i=1}^{B}$, a vision tower and a text tower produce $\ell_2$-normalized embeddings $\{(\mathbf{z}_v^{(i)},\ \mathbf{z}_t^{(i)}) \in \mathbb{R}^{C} \times \mathbb{R}^{C}\}$. The all-pair similarity matrix $\hat{\mathbf{S}} \in [-1,1]^{B\times B}$, with $\hat{S}_{ij} = {\mathbf{z}_v^{(i)}}^\top \mathbf{z}_t^{(j)}$, is optimized to predict a label matrix $\mathbf{S} \in [0,1]^{B \times B}$. We detail the image tower and learning objectives below. The text tower follows a standard transformer architecture, with modifications limited to report preprocessing (Sec.~\ref{sec:implementation_details}).

\subsection{Vision tower}
The vision tower takes as input a high-res image $\mathbf{x}_v$, formed by vertically concatenating the CC and MLO views of a single breast. For brevity, we omit the batch index $(i)$ unless otherwise stated. The vision tower produces an image embedding $\mathbf{z}_v$ in two steps: (1) the TopK-Patch module selects $K$ high-res patches $\{\mathbf{x}_{\mathrm{patch}}^{(k_j)} \in \mathbb{R}^{H_p \times W_p}\}_{j=1}^K$ from $N$ candidates based on importance scores $\mathbf{w} \in \mathbb{R}_{\geq 0}^{N}$ estimated from a low-res image $\mathbf{x}_{v_{\mathrm{low}}}$, obtained by downsizing $\mathbf{x}_v$ by a predefined factor; (2) a patch encoder $f_v^l$ maps the $K$ patches to embeddings $\{\mathbf{z}_{\mathrm{patch}}^{(k_j)}\}_{j=1}^K$, which are aggregated to yield $\mathbf{z}_v$.

\subsubsection{TopK-Patch: Differentiable Top-$k$ Patch Selection}

\paragraph{Differentiable top-$k$.}
An encoder $f_v^g$ maps the low-res image $\mathbf{x}_{v_{\mathrm{low}}}$ to patch importance scores $\mathbf{w} \in \mathbb{R}_{\geq 0}^{N}$ over all $N = N_h \times N_w$ patch locations, where $N_h$ and $N_w$ are the number of patch candidates on each spatial axis. Based on $\mathbf{w}$, we wish to sample $K$ locations represented by a one-hot vector sequence $E = (\mathbf{e}^{(k_1)}, \ldots, \mathbf{e}^{(k_K)})$, where $\mathbf{e}^{(k_j)} \in \{0,1\}^N$ with 1 at the $k_j$-th index. Sampling without replacement with probabilities proportional to $\mathbf{w}$, $E$ follows a Plackett--Luce distribution~\cite{plackett1975analysis}:
\begin{equation}
    p(E \mid \mathbf{w}) = 
    \frac{w_{k_1}}{Z} \cdot \frac{w_{k_2}}{Z - w_{k_1}} \cdots 
    \frac{w_{k_K}}{Z - \sum_{j=1}^{K-1} w_{k_j}},
    \label{eq:wrs}
\end{equation}
where $Z = \sum_{i=1}^{N} w_i$. However, establishing gradient flow through $\mathbf{w}$ is non-trivial due to both the stochastic nature of sampling~\cite{kingma2013auto,williams1992simple} and discreteness of the events~\cite{jang2017categoricalreparameterizationgumbelsoftmax,maddison2017concretedistributioncontinuousrelaxation}. We resolve this by leveraging the Gumbel-top-$K$ reparameterization~\cite{vieira2014gumbel,xie2019reparameterizable}: exact samples from $p(E \mid \mathbf{w})$ are obtained by perturbing the log-weights with Gumbel noise:
\begin{equation}
    \hat{r}_i = \log(w_i) + g_i, \quad g_i \sim \mathrm{Gumbel}(0,1),
\end{equation}
and selecting the top-$K$ elements of $\hat{\mathbf{r}}$. Since the stochastic noise $g_i$ is independent of $\mathbf{w}$, it is treated as a constant during backpropagation, restoring gradient flow through $\mathbf{w}$. The top-$k$ operation is then relaxed via an iterative temperature annealed softmax~\cite{plotz2018neural}. Starting from $\alpha_i^{(k_1)} := \hat{r}_i$, at each step $j = 1, \ldots, K$:
\begin{equation}
    e_i^{(k_j)} = \frac{\exp(\alpha_i^{(k_j)} / \tau)}{\sum_{m=1}^{N} \exp(\alpha_m^{(k_j)} / \tau)}, 
    \qquad
    \alpha_i^{(k_{j+1})} := \alpha_i^{(k_j)} + \log\left(1 - e_i^{(k_j)}\right),
    \label{eq:softopk}
\end{equation}
where $\tau > 0$ is a temperature parameter. A proof that the Gumbel-top-$K$ reparameterization yields exact samples from the Plackett--Luce distribution is provided in Appendix~\ref{sec:proof_top_k}.

\paragraph{Image embedding extraction.}
Given the sampled patch location sequence $E$, we extract the corresponding patches from patch candidates $\mathbf{X}_{\mathrm{patch}} = \{\mathbf{x}_{v_\mathrm{patch}}^{(n)} \in \mathbb{R}^{P_h \times P_w}\}_{n=1}^{N}$, obtained by patchifying $\mathbf{x}_{v}$ with a predefined patch size $P_h \times P_w$ and overlap ratio, via the inner product:
\begin{equation}
    \mathbf{x}_{v_\mathrm{patch}}^{(k_j)} = \langle \mathbf{e}^{(k_j)}, \mathbf{X}_{\mathrm{patch}} \rangle, \; \forall j \in \{1 ... K\}.
\end{equation}
The extracted patches are mapped to embeddings via $f_v^l$, i.e., $\mathbf{z}_{\mathrm{patch}}^{(k_j)} = f_v^l(\mathbf{x}_{\mathrm{patch}}^{(k_j)})$, and aggregated into a single image embedding with simple averaging $\mathbf{z}_v = \frac{1}{K}\sum_{j=1}^{K} \mathbf{z}_{\mathrm{patch}}^{(k_j)}$. Other pooling methods such as self-attention with [cls] token can be considered for potentially better performance.

\subsection{Learning objectives}

\subsubsection{Sup-sigmoid loss}
\label{sec:soft-sigmoid}

Given the homogeneity of mammography reports, the image-report similarity matrix $\hat{\mathbf{S}}$ in a mini-batch should reflect high semantic similarity broadly across pairs, not only on the diagonal. While the sigmoid loss~\cite{zhai2023sigmoid} affords the flexibility to learn such a non-diagonal similarity matrix by treating each pair independently rather than forcing saturation through partition function, when paired with the standard identity target matrix $\mathbf{S} = \mathbf{I}$, this flexibility is wasted.

We replace $\mathbf{I}$ with a similarity matrix derived from tabular data. Specifically, for the $i$-th sample in the minibatch, a ground-truth semantic vector $\mathbf{y}^{(i)}$ is constructed by encoding key tabular attributes --- BI-RADS score, breast density, cancer outcome, and lesion subtypes --- as individual one-hot or multi-hot vectors, which are then concatenated to form a single vector. The similarity between the $i$-th image and $j$-th report is then computed as $S_{i,j} = \langle \mathbf{y}^{(i)}, \mathbf{y}^{(j)} \rangle / \left( \| \mathbf{y}^{(i)} \| \cdot \| \mathbf{y}^{(j)} \| \right)$.

Taking the predicted similarity $\hat{\mathbf{S}}$ and the ground-truth similarity $\mathbf{S}$ as inputs, the sup-sigmoid loss is defined as:
\begin{equation}
\mathcal{L}_{\mathrm{sup\text{-}sigmoid}} = -\frac{1}{B} \sum_{i=1}^{B} \sum_{j=1}^{B} 
\left[ S_{i,j} \log \sigma_\gamma(\hat{S}_{i,j}) + 
(1 - S_{i,j}) \log\left(1 - \sigma_{\gamma}(\hat{S}_{i,j})\right) \right],
\label{eq:sup-sigmoid}
\end{equation}
where $\sigma_{\gamma}(\cdot)$ denotes the sigmoid function with temperature parameter $\gamma$.

\subsubsection{ROI supervision}
\label{sec:roi-supervision}
Providing region-of-interest (ROI) supervision can accelerate the learning of TopK-Patch and improve accuracy. In the EMBED dataset, approximately 4\% of exams include at least one ROI annotation, which we incorporate into training whenever available.

The predicted ROI mask $\hat{\mathbf{M}} \in [0,1]^{N_h \times N_w}$ is computed by aggregating the sampled patch locations $E = \{\mathbf{e}^{(k_j)}\}$. The ground-truth ROI mask $\mathbf{M} \in \{0,1\}^{N_h \times N_w}$ is generated by converting bounding-box coordinates into a binary mask. A Soft Dice is minimized between $\mathbf{M}$ and $\hat{\mathbf{M}}$:
\begin{equation}
    \mathcal{L}_{\text{dice}} = 1 - \frac{2\sum_{h,w} M_{h,w} \cdot \hat{M}_{h,w}}{\sum_{h,w} M_{h,w} + \sum_{h,w} \hat{M}_{h,w}}, \quad
    \hat{\mathbf{M}} = \mathrm{Reshape}_{N_h \times N_w} \Bigg(1 - \prod_{j=1}^K \big( 1 - \mathbf{e}^{(k_j)} \big) \Bigg),
    \label{eq:dice_loss}
\end{equation}
where $\mathrm{Reshape}_{N_h \times N_w}(\cdot)$ reshapes the resulting vector into an $N_h \times N_w$ mask.

\subsubsection{Final objective}
The final training objective is a weighted sum of Sup-sigmoid and Soft Dice losses:
\begin{equation}
    \mathcal{L} = \mathcal{L}_{\mathrm{sup\text{-}sigmoid}} + \lambda_{\mathrm{dice}}\, \mathcal{L}_{\mathrm{dice}},
\end{equation}
where $\lambda_{\mathrm{dice}}$ is a scalar weight. $\mathcal{L}_{\mathrm{dice}}$ is applied only to the ${\sim}4\%$ of samples with ROI labels. The batch average for $\mathcal{L}_{\mathrm{dice}}$ is omitted for the sake of brevity.

\section{Experiments}

\subsection{Datasets}
TopKSigLIP is trained on EMBED~\cite{jeong2023emory} and evaluate on EMBED (internal), VinDr~\cite{nguyen2023vindrmammolargescalebenchmarkdataset}, and RSNA~\cite{rsna2022breastcancer} (external).
EMBED~\cite{jeong2023emory} has 364,000 exams from the US. We use cohorts 1--2 (openly available) for training, 8 for validation, and 9--10 for testing. Each exam has multi-view images (primarily CC and MLO, with occasional special views for diagnostics), tabular results, radiology report, and ROI annotations for 4\% of exams.
VinDr~\cite{nguyen2023vindrmammolargescalebenchmarkdataset} has 5,000 exams from Vietnam with BI-RADS and density labels, as well as ROIs for non-benign findings. We use the predefined 4,000/1,000 train/test split.
RSNA~\cite{rsna2022breastcancer} has 11,913 screening exams from the US and Australia with cancer outcome labels and BI-RADS scores. No ROI or finding labels are provided. We use an 80/20 train/test split.
For both external datasets, the training sets are used only for linear probe training. 20\% of the training set is held out for validation.
We provide the label statistics of each dataset in Appendix~\ref{sec:a_label_statistics}. 

\subsection{Baselines}
We compare with four mammography VLMs --- Mammo-CLIP-B2/B5~\cite{ghosh2024mammoclipvisionlanguagefoundation}, MaMA~\cite{du2025multiviewmultiscalealignmentcontrastive}, and GLAM~\cite{DuYue_GeometryGuided_MICCAI2025} --- and one general-purpose medical VLM: MII~\cite{codella2024medimageinsightopensourceembeddingmodel}. 
We follow the model's proposed image, text pre-processing as well as their  zero-shot prompts where available (see Appendix~\ref{sec:a_evaluation_protocol_zero_shot}). For single-view models (Mammo-CLIP-B2/B5, MII), multi-view embeddings are obtained by averaging per-view output. Mammo-CLIP are trained on UPMC (their internal dataset) and VinDR. Identical to TopKSigLIP, MaMA and GLAM are trained on EMBED's cohorts 1\&2. MII is trained on RSNA and other non-breast radiology images. Thus we exclude MII in RSNA evaluation.

\subsection{Implementation details}
\label{sec:implementation_details}

\paragraph{preprocessing.}
CC and MLO views per breast are resized to $1536 \times 768$ after background removal and vertically concatenated to have $1536 \times 1536$. When standard views are unavailable (mostly diagnostic exams), other available views are substituted.
Reports are processed with LlaMA-3\cite{grattafiori2024llama3herdmodels} to extract mammography-specific findings, excluding patient-specific metadata such as age, race, and medical history. We further exclude findings from non-mammography modalities such as US and MRI, as they can not be inferred from mammogram images. Since the vision tower processes paired CC and MLO views from a single breast, only findings matching the corresponding laterality are retained. See Appendix~\ref{sec:a_report_parsing} for the LLM prompt and output examples. Cancer outcome from pathology is appended to support cancer risk prediction.

\paragraph{Architecture.}
The two vision encoders $f_v^g$ and $f_v^l$ are weight-shared, ImageNet-pretrained ConvNeXt-Tiny~\cite{liu2022convnet}. $f_v^g$ is truncated by dropping the last convolutional block. The text encoder $f_t$ is the pretrained CLIP text encoder with context length extended to 170. $\mathbf{x}_{v_{\mathrm{low}}}$ is obtained by downsizing $\mathbf{x}_v$ by $\times3$, resulting in image size $512 \times 512$. The patch grid is set to $N = N_h \times N_w = 16 \times 16 = 256$, with a patch size of $181 \times 181$ and 50\% overlap. The output of $f_v^g$ is resized and flattened so that $\mathbf{w}$ has 256 entries. The number of patches sampled is set to $K = 15$ during training.

\paragraph{Training.}
The temperature $\tau$ in the TopK-Patch starts at 1 and is gradually annealed to 2/3. The temperature $\gamma$ in Sup-sigmoid loss is learnable with an initial value of 10. $\lambda_{\mathrm{dice}}$ is set to 5. TopKSigLIP is trained with AdamW~\cite{loshchilov2017decoupled} for 100,000 iterations (learning rate $1\times10^{-5}$, weight decay 0.05, batch size 32) on a single RTX PRO 6000. Image augmentations include random affine transforms, intensity scaling, and ROI size and location perturbation.

\subsection{Evaluation protocol}

\paragraph{Accuracy.}
Model accuracy is accessed with AUROC metric across four tasks: BI-RADS classification, density classification, finding type identification --- mass (MASS), calcification (CALC), architectural distortion (AD), and asymmetry (AS) --- and cancer outcome prediction, in both zero-shot and linear probing settings. BI-RADS categories 0 and 6 are excluded from the BI-RADS classification task. 
For zero-shot evaluation, we adopt each model's published prompt template where available.
Linear probe evaluation attaches a single fully-connected layer on a frozen image encoder and trained per task per dataset minimizing class-weight adjust cross entropy loss with AdamW (20,000 iterations, learning rate $4\times10^{-3}$, weight decay 0.01, batch size 32). 
Further details on the evaluation protocols, including the rationale for excluding BI-RADS 0 and 6 and the zero-shot prompts used, are provided in Appendix~\ref{sec:a_evaluation_protocol}.

\paragraph{Localization.}
We evaluate localization using pixel-level average precision (AP) and Pointing Game (PG)~\cite{zhang2016topdownneuralattentionexcitation} between predicted heatmaps and ground-truth ROIs. TopKSigLIP uses its built-in patch importance scores $\mathbf{w}$ directly as the heatmap. All baseline models lack a built-in localization mechanism; for these, we apply \textit{Oracle Grad-CAM}, which uses the ground-truth label to guide heatmap generation in both zero-shot and linear probe settings. In the zero-shot setting, the ground-truth guided logit is computed as the inner product between the image embedding and the text embedding of the ground-truth prompt. In the linear probe setting, the logit outputs from the linear layer are masked by the ground-truth label and summed to form a single scalar logit. In both cases, gradients of this logit with respect to the image encoder's feature map prior to spatial pooling are used to produce the Grad-CAM heatmap. Appendix.~\ref{sec:a_localization_evaluation} provides further detail on the \textit{Oracle Grad-CAM} implementation in both settings.

\begin{table*}[t]
\centering
\caption{Zero-shot performance (AUC). \textbf{Bold}: best, \underline{underline}: second best. $k$ denotes the number of patches sampled at inference in TopKSigLIP. ``--'' indicates missing ground truth labels.}
\label{tab:zeroshot}
\definecolor{datasetgray}{gray}{0.90}
\resizebox{\textwidth}{!}{%
\begin{tabular}{l|ccccc|c|cccccc|ccccc}
\toprule
& \multicolumn{5}{c|}{Findings}
& Cancer
& \multicolumn{6}{c|}{BI-RADS}
& \multicolumn{5}{c}{Density} \\
\cmidrule(lr){2-6} \cmidrule(lr){7-7} \cmidrule(lr){8-13} \cmidrule(lr){14-18}
& Mass
& Calc
& AD
& AS
& Avg
& {}
& 1
& 2
& 3
& 4
& 5
& Avg
& A
& B
& C
& D
& Avg \\
\midrule
%
% ==================== EMBED ====================
\rowcolor{datasetgray}
\multicolumn{18}{c}{\textbf{EMBED}} \\
\hline
\noalign{\vspace{4pt}}
MedImageInsight   & 62.1 & 54.0 & 39.9 & 48.7 & 51.2 & 67.6 & 46.7 & 56.5 & 61.8 & 56.2 & 64.7 & 57.2 & 76.6 & 41.8 & 65.5 & 47.2 & 57.7 \\
MammoCLIP-B2      & 41.1 & 47.8 & 58.4 & 44.4 & 47.9 & 50.7 & 59.0 & 59.8 & 55.7 & 47.2 & 44.2 & 53.2 & 94.3 & 80.9 & 81.8 & 93.0 & 87.5 \\
MammoCLIP-B5      & 43.5 & 33.5 & 53.5 & 56.6 & 46.8 & 46.4 & 64.1 & 60.0 & 57.7 & 64.0 & 58.4 & 60.8 & 94.8 & 84.4 & 77.2 & 91.8 & 87.0 \\
MaMA              & 69.7 & 59.7 & 53.0 & 58.0 & 60.1 & 49.4 & 67.6 & 62.1 & 69.1 & 37.0 & 80.5 & 63.3 & 95.8 & 85.7 & 91.4 & 96.0 & 92.2 \\
GLAM              & 54.2 & 50.2 & 49.7 & 53.3 & 51.9 & 47.6 & 58.5 & 62.1 & 59.0 & 60.4 & 35.4 & 55.1 & 87.1 & 74.7 & 76.9 & 82.0 & 80.2 \\
\hline
\noalign{\vspace{4pt}}
TopKSigLIP (k=15) & 81.9             & 79.1             & \textbf{64.0}    & \textbf{67.2}    & 73.1             & 85.0             & 80.8             & 77.6             & 78.6             & 86.9             & 77.0             & 80.2             & \underline{96.3} & \textbf{91.1}    & \textbf{93.7}    & \textbf{97.1}    & \textbf{94.6}    \\
TopKSigLIP (k=30) & \textbf{82.4}    & \underline{80.4} & \underline{63.9} & \underline{67.1} & \textbf{73.4}    & \textbf{85.4}    & \underline{82.3} & \underline{78.8} & \underline{79.7} & \underline{88.4} & \underline{81.2} & \underline{82.1} & 96.3             & \underline{91.1} & \underline{93.6} & \underline{97.1} & \underline{94.5} \\
TopKSigLIP (k=45) & \underline{82.1} & \textbf{80.6}    & 63.2             & 66.6             & \underline{73.1} & \underline{85.2} & \textbf{82.7}    & \textbf{79.2}    & \textbf{79.8}    & \textbf{88.7}    & \textbf{82.6}    & \textbf{82.6}    & \textbf{96.3}    & 91.1             & 93.6             & 97.1             & 94.5             \\
\midrule
%
% ==================== VinDr ====================
\rowcolor{datasetgray}
\multicolumn{18}{c}{\textbf{VinDr}} \\
\hline
\noalign{\vspace{4pt}}
MedImageInsight   & 58.8             & 59.7             & 39.7             & 63.1             & 55.3             & --                & 40.2             & 47.4             & 49.1             & 44.2             & 79.9             & 52.1             & 83.0          & 28.6             & 54.8             & 45.7             & 53.0             \\
MammoCLIP-B2      & 58.6             & 66.0             & 54.0             & 58.3             & 59.2             & --                & 65.0             & \textbf{68.8}    & 49.2             & 46.5             & 45.6             & 55.0             & 85.2          & 94.3             & 71.0             & 77.1             & 81.9             \\
MammoCLIP-B5      & 47.0             & 75.6             & 61.6             & 43.9             & 57.0             & --                & 71.8             & 59.8             & 50.4             & 60.5             & 88.3             & 66.1             & \textbf{98.4} & \underline{94.9} & 65.5             & 82.3             & 85.3             \\
MaMA              & 64.8             & 64.2             & 42.4             & 30.8             & 50.6             & --                & 65.2             & 51.5             & 55.6             & 61.8             & \underline{90.7} & 65.0             & 94.1          & \textbf{96.3}    & \textbf{85.1}    & 84.9             & \underline{90.1} \\
GLAM              & 54.9             & 45.4             & 58.2             & 64.5             & 55.8             & --                & 49.2             & 50.4             & 54.0             & 50.4             & 31.8             & 47.2             & 95.6          & 80.7             & 72.4             & 52.5             & 75.3             \\
\hline
\noalign{\vspace{4pt}}
TopKSigLIP (k=15) & 81.2             & 97.9             & \textbf{69.1}    & \underline{71.0} & \textbf{79.8}    & --                & 72.6             & 63.9             & \underline{69.7} & \textbf{77.9}    & 84.5             & 73.7             & 97.7             & 92.3 & \underline{81.1} & \underline{89.8} & \textbf{90.2}    \\
TopKSigLIP (k=30) & \textbf{81.5}    & \underline{98.3} & \underline{66.9} & \textbf{71.2}    & \underline{79.5} & --                & \textbf{73.2}    & \underline{64.5} & \textbf{70.1}    & 77.3             & \textbf{90.8}    & \textbf{75.2}    & \underline{97.7} & 92.1 & 80.8             & 89.8             & 90.1             \\
TopKSigLIP (k=45) & \underline{81.4} & \textbf{98.6}    & 64.2             & 66.5             & 77.7             & --                & \underline{73.1} & 64.2             & 68.7             & \underline{77.9} & 90.0             & \underline{74.8} & 97.7             & 92.1 & 80.7             & \textbf{89.8}    & 90.0             \\
\midrule
%
% ==================== RSNA ====================
\rowcolor{datasetgray}
\multicolumn{18}{c}{\textbf{RSNA}} \\
\hline
\noalign{\vspace{4pt}}
MammoCLIP-B2      & -- & -- & -- & -- & -- & 62.2          & 67.1             & 63.7             & -- & -- & -- & 65.4             & 94.9             & 82.8             & 79.2             & 94.5             & 87.8 \\
MammoCLIP-B5      & -- & -- & -- & -- & -- & 59.9          & 68.8             & 63.5             & -- & -- & -- & 66.1             & 95.4             & 85.7             & 74.2             & 92.6             & 87.0 \\
MaMA              & -- & -- & -- & -- & -- & 64.1          & 75.8             & 61.2             & -- & -- & -- & 68.5             & 95.6             & 85.6             & 89.7             & 95.9             & 91.7 \\
GLAM              & -- & -- & -- & -- & -- & 55.7          & 52.8             & 65.6             & -- & -- & -- & 59.2             & 88.2             & 76.2             & 76.8             & 84.4             & 81.4 \\
\hline
\noalign{\vspace{4pt}}
TopKSigLIP (k=15) & -- & -- & -- & -- & -- & \textbf{78.7}    & 77.1             & 77.7             & -- & -- & -- & 77.4             & \underline{96.4} & \textbf{90.6}    & \textbf{92.1}    & \textbf{97.1}    & \textbf{94.1}    \\
TopKSigLIP (k=30) & -- & -- & -- & -- & -- & \underline{78.2} & \underline{78.7} & \underline{78.9} & -- & -- & -- & \underline{78.8} & 96.4             & \underline{90.6} & \underline{92.1} & 97.0             & \underline{94.0} \\
TopKSigLIP (k=45) & -- & -- & -- & -- & -- & 77.6             & \textbf{79.5}    & \textbf{79.4}    & -- & -- & -- & \textbf{79.5}    & \textbf{96.4}    & 90.6             & 92.0             & \underline{97.1} & 94.0             \\
\bottomrule
\end{tabular}%
}
\end{table*}
\begin{table*}[t]
\centering
\caption{linear probe classification performance (AUC). \textbf{Bold}: best, \underline{underline}: second best. ``--'' indicates missing ground truth labels.}
\label{tab:linearprobe}
\definecolor{datasetgray}{gray}{0.90}
\resizebox{\textwidth}{!}{%
\begin{tabular}{l|cccc|cccc|cccc}
\toprule
& \multicolumn{4}{c|}{\textbf{EMBED}}
& \multicolumn{4}{c|}{\textbf{VinDr}}
& \multicolumn{4}{c}{\textbf{RSNA}} \\
\cmidrule(lr){2-5} \cmidrule(lr){6-9} \cmidrule(lr){10-13}
& Findings & Cancer & BI-RADS & Density
& Findings & Cancer & BI-RADS & Density
& Findings & Cancer & BI-RADS & Density \\
\midrule
MedImageInsight   & 63.0             & 85.0             & \underline{86.5} & 94.5             & 58.7             & --  & \textbf{83.7}    & \textbf{94.1}    & --  & --               & --               & --               \\
MammoCLIP-B2      & 66.1             & 83.6             & 85.2             & 94.5             & 62.0             & --  & 77.9             & 93.2             & --  & 75.8             & 76.7             & 93.4             \\
MammoCLIP-B5      & 61.5             & 85.0             & 85.6             & \underline{94.6} & 61.5             & --  & 79.2             & 93.2             & --  & 76.9    & \underline{79.7} & 93.3             \\
MaMA              & 73.6             & \underline{86.0} & \textbf{87.3}    & \textbf{94.9}    & 72.2             & --  & 77.0             & \underline{94.0} & --  & 75.6             & \textbf{83.6}    & \textbf{94.4}    \\
GLAM              & 68.2             & 77.7             & 82.1             & \underline{94.6} & 69.2             & --  & 70.4             & 93.1             & --  & 63.2             & 71.7             & 93.8             \\
\hline
\noalign{\vspace{4pt}}
TopKSigLIP (k=15) & 73.6             & 85.5             & 85.1             & \underline{94.6} & 80.1             & --  & 81.8             & \underline{94.0} & --  & \textbf{78.3}    & 76.9    & \underline{94.2} \\
TopKSigLIP (k=30) & \textbf{74.1}    & \textbf{86.2}    & 85.3             & \underline{94.6} & \textbf{81.6}    & --  & \underline{82.0} & 93.9             & --  & \underline{77.6} & 76.9    & 94.1             \\
TopKSigLIP (k=45) & \underline{73.6} & 85.9             & 85.1             & \underline{94.6} & \underline{81.3} & --  & 81.6             & 93.8             & --  & 77.3             & 76.3 & \underline{94.2} \\
\bottomrule
\end{tabular}%
}
\end{table*}
\subsection{Results}

\subsubsection{Accuracy}
Tables~\ref{tab:zeroshot} and~\ref{tab:linearprobe} report AUROC under zero-shot and linear probe settings. At inference, $k$ can be freely adjusted beyond the training value of $k{=}15$, covering roughly 10\%, 20\%, and 30\% of the image for $k{=}15, 30, 45$.

\paragraph{Zero-shot.} TopKSigLIP achieves state-of-the-art performance across all tasks and datasets. Gains are largest on the clinically demanding tasks of finding detection and cancer prediction, where fine-grained spatial features are critical, with improvements of $+29.2$ and $+17.8$ AUC points over the prior best on VinDr and EMBED, respectively. Notably, $k{=}15$ covers only 10\% of the image yet achieves competitive performance, with diminishing returns at larger $k$, confirming that mammographic findings are highly localized and sparse patch selection is sufficient. The strong zero-shot performance further suggests that TopKSigLIP can serve as an off-the-shelf model for various clinical prediction tasks and, without any fine-tuning, as a backbone for report generation in future work.

\paragraph{Linear probing.} The performance gap narrows under linear probing, as baselines benefit more from supervised fine-tuning while TopKSigLIP's performance remains largely stable. TopKSigLIP nevertheless retains clear advantages on finding detection and cancer prediction. This suggests that while TopKSigLIP is superior in image--text alignment, its image encoder alone does not capture some key visual features in mammograms as well as the image encoders used in baseline models. We conjecture this is due to two reasons: as a proof-of-concept, TopKSigLIP deliberately employs the lightweight ConvNeXt-Tiny encoder (28.6M parameters), which is significantly smaller than competing backbones such as DaViT (87.9M)~\cite{ding2022davitdualattentionvision} and ViT-B (86M)~\cite{dosovitskiy2020image}; and it is trained on a single GPU with batch size 32, versus batch sizes exceeding 100 in several baselines. We expect performance to improve with larger vision encoders and batch size in future work.

\begin{figure}[t]
     \centering
         \includegraphics[width=0.99\textwidth]{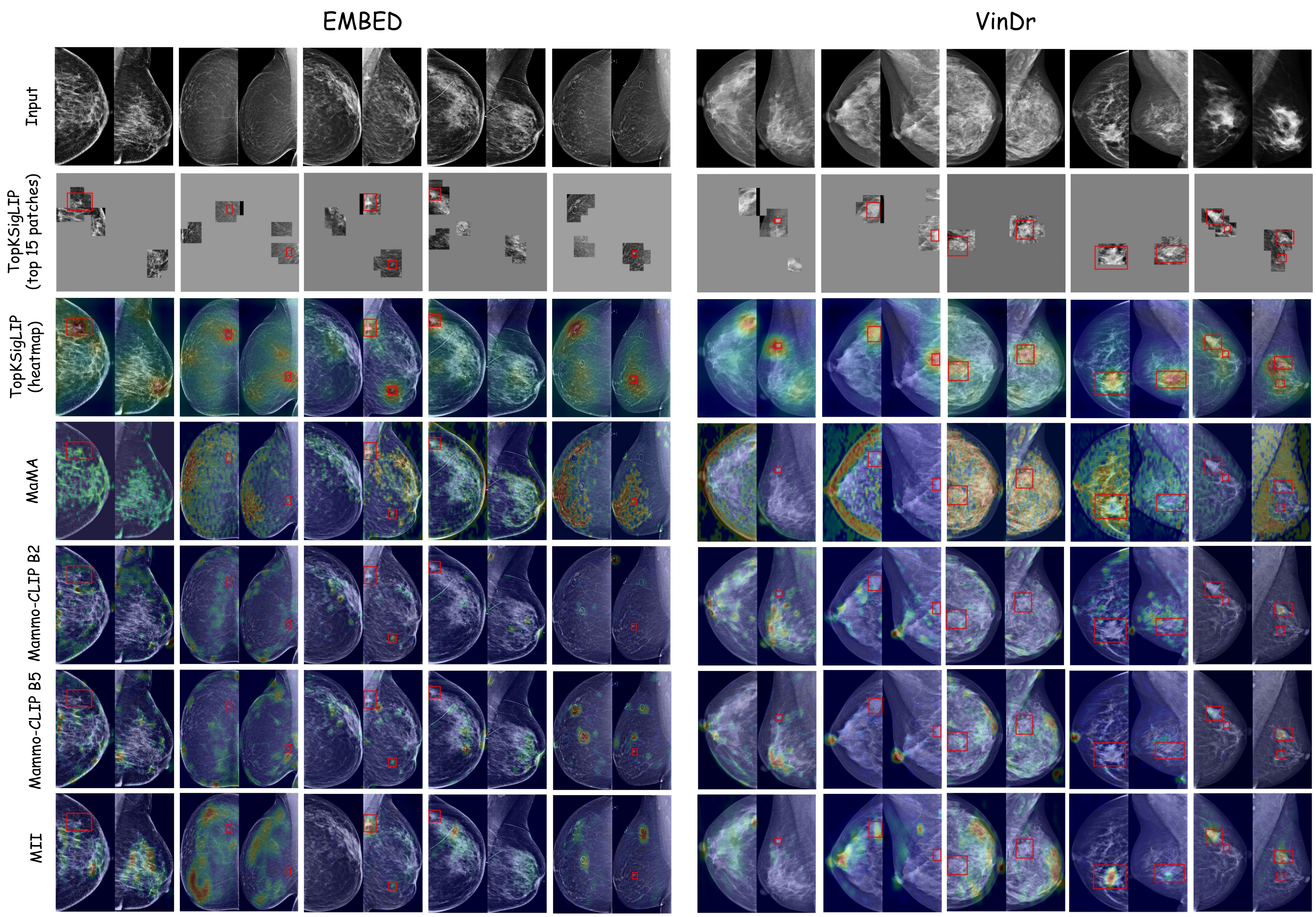}
\caption{\textbf{Localization visualization.} Heatmaps generated by each model on EMBED  and VinDr. Red bounding boxes are ground-truth ROIs. TopKSigLIP's heatmap is the model output $\mathbf{w}$, while all baseline models use \textit{Oracle Grad-CAM} on linear probe in generating the Grad-CAM heatmap. TopKSigLIP consistently produces more accurate heatmap. For TopKSigLIP, the top-15 patches sampled based on the patch importance scores are also shown.}
\label{fig:visualiation}
\end{figure}

\subsubsection{Localization}
\begin{table}{}
\centering
\caption{Localization performance (Average Precision (AP) / Pointing Game (PG)).}
\label{tab:localization}
\definecolor{datasetgray}{gray}{0.90}
\small
\resizebox{0.7\textwidth}{!}{
\begin{tabular}{l|cc|cc}
\toprule
& \multicolumn{2}{c|}{\textbf{Zero-shot}}
& \multicolumn{2}{c}{\textbf{linear probe}} \\
\cmidrule(lr){2-3} \cmidrule(lr){4-5}
Model & EMBED & VinDr & EMBED & VinDr \\
\midrule
MedImageInsight  & 4.2 / 4.0          & 5.4 / 4.2          & \underline{8.0} / \underline{12.7} & \underline{28.8} / \underline{46.2} \\
MammoCLIP-B2     & 5.8 / 10.5        & 14.4 / 38.2         & 4.9 / 11.6          & 20.8 / 37.5         \\
MammoCLIP-B5     & \underline{7.4} /\underline{18.8} & \underline{18.8} / \underline{47.2}  & 5.9 / 11.3 & 21.6 / 34.0 \\
MaMA             & 3.1 / 2.8        & 4.4 / 6.3          & 5.7 / 3.0         & 9.4 / 11.1           \\
TopKSigLIP       & \textbf{14.7} / \textbf{42.1}  & \textbf{28.8} / \textbf{61.8} & --          & --            \\
\bottomrule
\end{tabular}
}
\end{table}

Table~\ref{tab:localization} demonstrates the strong localization performance of TopKSigLIP, whose built-in patch importance map $\mathbf{w}$ consistently outperforms \textit{Oracle Grad-CAM} heatmaps across all baselines in both zero-shot and linear probe settings. Note that the linear probe localization result for TopKSigLIP is omitted, as $\mathbf{w}$ is produced by the frozen vision tower and thus yields identical heatmaps regardless of the linear probe. As shown in Fig.~\ref{fig:visualiation}, while baseline Grad-CAM heatmaps are often visually sharper, they produce many false positives, frequently highlighting high-intensity regions that superficially resemble findings or anatomically salient but clinically irrelevant structures such as the nipple. Among the top-$k{=}15$ patches sampled by TopKSigLIP, false-positive patches tend to correspond to regions with plausible imaging features rather than entirely uninformative areas. We also observe that MaMA attends broadly across breast tissue rather than focusing on specific findings; we conjecture that this global conditioning allows the model to exploit shortcut features, which may partly explain its competitive linear probe performance in Table~\ref{tab:linearprobe} despite poor localization. Localization result of baselines in linear probe setting with Grad-CAM and other interpretation methods such as NormGrad~\cite{rebuffi2019normgradfindingpixelsmatter} without label supervision is provided in Appendix.~\ref{sec:a_grad_cam_on_linear}. We provide more qualitative results of TopKSigLIP in Appendix Fig.~\ref{fig:a_vis_embed}~\ref{fig:a_vis_vindr}.

\subsection{Ablation studies}
\label{sec:ablation}

\begin{table}[hbt]
\centering
\caption{TopK-Patch ablation on EMBED. We compare standard full-image encoding at four resolutions without TopK-Patch against TopKSigLIP across efficiency, prediction accuracy (AUC), and localization (AP). Efficiency is measured with batch size = 32 and max memory 96GB on fp32.} 
\label{tab:ablation_topkpatch}
\resizebox{0.99\textwidth}{!}{%
\begin{tabular}{l|cc|cccc|c}
\toprule
& \multicolumn{2}{c|}{\textbf{Efficiency}}
& \multicolumn{4}{c|}{\textbf{Accuracy}}
& \textbf{Localization} \\
\cmidrule(lr){2-3} \cmidrule(lr){4-7} \cmidrule(lr){8-8}
& Peak Mem (GB)
& MACs (G)
& Findings
& Cancer
& BI-RADS
& Density
& {} \\
\midrule
$256 \times 256$    & 8.94 & 398 & 64.3 & 81.3 & 79.2 & 93.4 & 8.0 \\
$512 \times 512$    & 22.6 & 956 & 68.2 & 82.7 & 80.3 & 92.6 & 7.6 \\
$1024 \times 1024$   & 77.1 & 3191 & 69.7 & 82.9 & \textbf{82.6} & \underline{94.1} & 6.1 \\
$1536 \times 1536$   & OOM & 6915 & \underline{71.5} & \textbf{86.5} & \underline{82.3} & 93.1 &  \underline{12.8} \\
\hline
\noalign{\vspace{4pt}}
w/ ROI sup TopKSigLIP (k=15)  & 64.2 & 2441 & \textbf{73.1} & \underline{85.0} & 80.2 & \textbf{94.6} & \textbf{14.7} \\
w/o ROI sup TopKSigLIP (k=15)  & - & - & 68.8 & 78.8 & 71.9 & 94.5 & 8.9 \\
\bottomrule
\end{tabular}%
}
\end{table}

\paragraph{TopK-Patch.}
Our central claim is that high-res inputs are necessary for accurately detecting mammographic findings, which are often small and easily destroyed by downsampling, yet naively training at full resolution incurs prohibitive GPU memory costs and reduces training batch size. To validate this, we ablate the TopK-Patch by replacing the proposed vision tower with a standard vision tower that encodes the full image at four resolutions: $256^2$, $512^2$, $1024^2$, and $1536^2$, and compare against TopKSigLIP along three axes: efficiency, prediction accuracy, and localization performance with zero-shot \textit{Oracle Grad-CAM}. As shown in Table~\ref{tab:ablation_topkpatch}, prediction accuracy and localization improve with resolution. However, $1024^2$ already exceeds 77 GB and $1536^2$ is OOM at batch size 32, making full-resolution training infeasible. The $1536^2$ training is done with multi-gpu. TopKSigLIP exceeds $1536^2$ on most tasks while consuming only 64.2 GB and remaining trainable at batch size 32. The large drop in accuracy when ROI supervision is removed shows that ROI supervision is essential for training the TopK-Patch module, despite being only $\sim$4\% of training samples.

\begin{table}[hbt]
\centering
\small
\caption{Sup-sigmoid ablation on EMBED in zero-shot setting (AUC). ``Sup.'' indicates supervized; ``Obj.'' indicates training objective.}
\label{tab:ablation_supsigmoid}
\resizebox{0.7\textwidth}{!}{%
\begin{tabular}{cc|cccc}
\toprule
Sup. & Obj. & Findings & Cancer & BI-RADS & Density \\
\midrule
            & Contrastive & 65.7 & 71.5 & 63.4 & 92.8 \\
\checkmark  & Contrastive & 64.7 & 70.1 & 74.3 & 93.1 \\
            & Sigmoid     & 63.6 & 69.5 & 54.1 & 92.1 \\
\checkmark  & Sigmoid     & \textbf{73.1} & \textbf{85.0} & \textbf{80.2} & \textbf{94.6} \\
\bottomrule
\end{tabular}%
}
\end{table}

\paragraph{Sup-sigmoid loss.}
Another claim is that the sigmoid loss is a better learning objective than the contrastive loss when the target matrix $\mathbf{S}$ is augmented by tabular data to be non-identity. Table~\ref{tab:ablation_supsigmoid} compares four combinations of learning objective (contrastive vs.\ sigmoid) and supervision type (identity vs.\ tabular labels), with all other components held fixed; the last row is TopKSigLIP ($k{=}15$). The contrastive loss with supervision, which is identical to MedCLIP's~\cite{wang2022medclip} learning objective, shows only a small benefit, whereas the sigmoid loss yields large gains, confirming that the sigmoid formulation---which treats each pair independently rather than normalizing---is necessary to exploit the non-identity target matrix.

\section{Conclusion and Future Work}
TopKSigLIP is a new mammography VLM that addresses two overlooked challenges in applying CLIP-style pretraining to breast imaging: high-res inputs with sparse findings, and the homogeneity of screening-dominant radiology reports. TopKSigLIP outperforms prior approaches across prediction accuracy and localization on both internal and external benchmarks. Our work demonstrates that architectures and training objectives assumed to be sufficiently general can modifications informed by domain-specific failure modes remain

\newpage
\paragraph{Bias in 2D mammography representation}
TopKSigLIP may introduce certain biases in the learned mammographic representations. The first stems from the TopK-Patch module, which selects patches at a single fixed scale. If a lesion is significantly larger than the patch size, the module risks capturing only a fragment of it, potentially missing broader context (though downstream patch aggregation may partially mitigate this). A natural extension is to predict importance scores at multiple scales, each handling a different patch size. While we expect this to yield incremental improvements, we omit it here as this work is intended as a proof of concept rather than an exhaustive system. Another concern on the TopK-Patch module is error propagation: if the TopK-Patch fails to select patches containing the lesion, this mistake propagates through the rest of the pipeline as there is not skip-connection to restore the lost information. Future work could address this either by strengthening the patch selection module or by incorporating global image information into the final embedding.

The sup-sigmoid loss may also introduce its own biases. By providing explicit similarity labels derived from tabular attributes, the loss may encourage the model to exploit only the corresponding keywords in the report while overlooking finer details such as lesion size, location, and morphology. In contrast, the standard CLIP objective, which simply matches each image to its unique caption, provides a subtler signal that encourages a richer feature corroesponce between the image and text features. We plan to investigate this in future work.

\paragraph{Multi-modality and Longitudinal reasoning}
There are still many open challenges in building a robust mammography VLM deployable in real clinical settings. In clinical practice, breast radiologists rely not only on 2D mammography but also on complementary imaging modalities such as 3D digital breast tomosynthesis (DBT), ultrasound, and MRI, each providing distinct diagnostic information. Furthermore, a radiologist's decision is often informed not by a single exam but by progression over time --- reasoning through a patient's screening-to-diagnostic journey and integrating longitudinal imaging and report data.

The prevailing approach to multi-modal temporal reasoning is to convert all modalities into a unified token sequence and train a next-token prediction model. In principle, such a model could handle diverse tasks --- report generation, image synthesis, and even predicting future disease progression through iterative autoregressive generation. However, realizing this vision for mammography is far from straightforward. As we have emphasized throughout this paper, mammographic images are substantially larger than the natural images typical in computer vision, and the problem is compounded with volumetric modalities: for instance, following the standard $16 \times 16$ convention~\cite{dosovitskiy2020image}, a single DBT volume of size $1024 \times 1024 \times 128$ tokenized at $16 \times 16 \times 16$ yields 32,768 tokens, already exceeding the context length of most modern LLMs during training.

A common strategy to reduce the image token size in LLM-based multi-modal model is a two-stage pipeline: first projecting each image into a compact set of tokens via self-supervised learning or VLM pretraining, then consuming these tokens in a downstream LLM model. For instance, LLaVA~\cite{liu2023visualinstructiontuning,liu2024improved} uses a CLIP-pretrained image encoder to compress an image into a small number of tokens. However, this is only possible if the pretrained image encoder is robustly trained. In this work, we propose such a pretrained encoder for 2D mammography, but analogous encoders would be needed for each additional modality.

\subsubsection{Broader Impact}
\label{sec:a_broader_impact}

% It is in the nature of the field to strive for simpler and more general solutions --- from modality-specific architectures to modality-agnostic ones, and from task-specific models to task-agnostic ones. Yet on this trajectory, many problems have first been addressed through specialized approaches. These specialized solutions, though eventually superseded by more general successors, are far from useless: they serve as lightweight alternatives when efficiency matters, and they provide the building blocks and motivation toward more unified frameworks. TopKSigLIP is one such contribution --- not a final answer, but a step along the way. While our method may not be the general solution that supersedes prior work in mammography or medical VLMs broadly, it offers a new perspective on medical VLM design. We hope it serves as a reminder to the community that architectures and training objectives once assumed to be sufficiently general can falter in specialized medical domains, and that recognizing these failure modes is a necessary step toward truly robust solutions.

Beyond mammography, the two challenges we identify are not unique to breast imaging. High-res inputs are equally critical in histopathology, where whole-slide images can exceed $100{,}000 \times 100{,}000$ pixels~\cite{bae2023data}. Similarly, report homogeneity is a recurring issue across various screening-dominant modalities, exhibiting a strong skew toward normal findings, producing repetitive reports that violate the diversity assumption underlying contrastive learning. The TopK-Patch module and sup-sigmoid loss proposed here are not inherently tied to mammography and may serve as useful starting points for these adjacent domains.

From a clinical deployment perspective, TopKSigLIP's built-in localization offers a practical advantage. Unlike post-hoc methods such as Grad-CAM, which provide saliency maps that can be difficult to validate clinically, the discrete patch locations selected by TopK-Patch are transparent and auditable --- a radiologist can directly verify whether the model is attending to the correct lesion. 

\subsection{Label Statistics}
\label{sec:a_label_statistics}
The three datasets---EMBED, VinDr, and RSNA---span diverse patient demographics (EMBED: 41\% African American; VinDr: Vietnamese cohort; RSNA: US and Australian cohorts) and exhibit highly skewed label distributions. Table~\ref{tab:label_stats} summarizes label statistics at the breast level (merging CC and MLO views), so sample counts are approximately twice the reported exam counts. EMBED statistics are reported on cohorts 1, 2, 8, 9, and 10. ROI count denotes the proportion of images containing at least one ROI annotation (multiple ROIs per image are counted as one). ROI size is the fraction of the image area covered by the ROI foreground, normalized by the image resolution ($1536 \times 1536$).

\begin{table}[h]
\centering
\caption{Label statistics of the three datasets.}
\label{tab:label_stats}
\resizebox{0.8\linewidth}{!}{%
\begin{tabular}{lrrr}
\toprule
& \textbf{EMBED} (Cohort 1,2,8,9,10) & \textbf{RSNA} & \textbf{VinDr} \\
& $(n=330{,}449)$ & $(n=23{,}826)$ & $(n=10{,}041)$ \\
\midrule
\multicolumn{4}{l}{\textit{Findings, n (\%)}} \\
\quad Mass               & 14{,}584 (4.4) & N/A & 618 (6.2)  \\
\quad Calcification      & 16{,}694 (5.1) & N/A & 235 (2.3)  \\
\quad Arch.\ Distortion  & 1{,}861 (0.6)  & N/A & 243 (2.4)  \\
\quad Asymmetry          & 1{,}861 (0.6)  & N/A & 243 (2.4)  \\
\midrule
\multicolumn{4}{l}{\textit{Cancer (positive), n (\%)}} \\
\quad Positive           & 2{,}346 (0.7)  & 492 (2.1)     & N/A        \\
\midrule
\multicolumn{4}{l}{\textit{BI-RADS, n (\%)}} \\
\quad Missing            & 0 (0.0)        & 13{,}221 (55.5) & 0 (0.0)  \\
\quad 0                  & 13{,}695 (4.1) & 3{,}475 (14.6)  & N/A      \\
\quad 1                  & 252{,}764 (76.5) & 6{,}280 (26.4) & 6{,}711 (66.8) \\
\quad 2                  & 45{,}951 (13.9)  & 850 (3.6)      & 2{,}359 (23.5) \\
\quad 3                  & 12{,}557 (3.8)   & N/A            & 467 (4.7)  \\
\quad 4                  & 5{,}190 (1.6)    & N/A            & 387 (3.9)  \\
\quad 5                  & 292 (0.1)        & N/A            & 117 (1.2)  \\
\midrule
\multicolumn{4}{l}{\textit{Density, n (\%)}} \\
\quad Missing            & 0 (0.0)          & 12{,}208 (51.2) & 0 (0.0) \\
\quad A                  & 33{,}540 (10.1)  & 1{,}106 (4.6)   & 51 (0.5)   \\
\quad B                  & 138{,}334 (41.9) & 5{,}022 (21.1)  & 960 (9.6)  \\
\quad C                  & 140{,}455 (42.5) & 4{,}872 (20.4)  & 7{,}676 (76.4) \\
\quad D                  & 18{,}120 (5.5)   & 618 (2.6)       & 1{,}354 (13.5) \\
\midrule
\multicolumn{4}{l}{\textit{ROI}} \\
\quad Count, mean   & 0.045 & 0.000 & 0.097 \\
\quad Size, mean (SD)    & 0.108 (0.116) & N/A           & 0.132 (0.179) \\
\bottomrule
\end{tabular}}
\end{table}

\subsection{Report Parsing}
\label{sec:a_report_parsing}
We use Llama-3~\cite{grattafiori2024llama3herdmodels} to extract clinically relevant content from raw radiology reports. The prompt is provided in Table~\ref{tab:prompt_report_parsing}. An example of the input report and the expected parsed report used for fewshot prompting is provided in Table~\ref{tab:prompt_example}.  Below we describe the key design decisions in the report parsing pipeline.

\paragraph{Exclusion of patient metadata.}
We exclude patient-specific metadata such as age, race, and medical history. Although some of this information may be weakly correlated with imaging features, aligning image representations to demographic attributes brings little clinical benefit and risks encoding spurious correlations.

\paragraph{Exclusion of non-mammographic modalities.}
Findings derived from ultrasound and MRI are removed, as this information cannot be inferred from mammogram images alone. However, we deliberately retain findings and impressions that reference digital breast tomosynthesis (DBT), even though our model processes only 2D mammograms. Our reasoning is that lesions described under DBT findings may still leave subtle traces in the corresponding 2D projections, and exposing the model to these descriptions during training may encourage it to capture signals that are present but difficult to perceive.

\paragraph{Laterality-specific parsing.}
Since the vision tower processes paired CC and MLO views from a single breast, we parse the report into left and right findings separately. Sentences that explicitly specify laterality are assigned exclusively to the corresponding side; sentences referring to both breasts or lacking laterality information are duplicated into both fields.

\paragraph{Enforcing non-empty findings.}
We do not permit empty finding or impression fields for either breast. When no mammographic findings are present, we insert the default sentence: ``No suspicious masses, calcifications or other abnormalities are seen.'' This design choice ensures that the model explicitly learns to represent negative semantics.

\paragraph{BI-RADS score handling.}
The BI-RADS assessment is not extracted from the report text. In standard clinical practice, the BI-RADS score reported in the radiology report reflects the most concerning finding across both breasts. Naively assigning this exam-level score to each breast independently would therefore be incorrect. Instead, we read the breast-level BI-RADS score from the structured tabular data accompanying each exam and convert it into a natural language sentence using the following mapping:
\begin{itemize}
    \item BI-RADS 1: ``N -- Negative.''
    \item BI-RADS 2: ``B -- Benign.''
    \item BI-RADS 3: ``P -- Probably benign.''
    \item BI-RADS 4: ``S -- Suspicious.''
    \item BI-RADS 5: ``M -- Highly suggestive of malignancy.''
\end{itemize}
Note that BI-RADS categories 0 and 6 are excluded. BI-RADS 0 is assigned during screening to flag cases requiring further diagnostic workup. After the diagnostic examination, the patient is reassigned to BI-RADS 1, 2, or 3 if the finding is benign or a screening artifact, or to BI-RADS 4 or 5 if cancer is suspected and biopsy is warranted. In this sense, BI-RADS 0 is not a true diagnostic category but a temporary placeholder prior to a definitive assessment. Although our model is not explicitly trained to predict BI-RADS 0, it can be trivially recovered in a screening setting: if the model's prediction falls within BI-RADS 3--5, this indicates a suspicious finding that would warrant recall, and can be mapped to BI-RADS 0 accordingly. BI-RADS 6 is excluded because it is assigned only after a confirmed tissue diagnosis and thus cannot be inferred from imaging features alone.

\paragraph{Cancer outcome handling.}
Biopsy results are not included in the radiology report itself. Similar to the BI-RADS score handling described above, we read the biopsy-proven cancer outcome from the structured tabular data and append it to the parsed report as a natural language sentence. The original pathology label in EMBED is a multi-class variable reflecting the most severe pathology result from a given specimen, with categories: 0 (invasive cancer), 1 (non-invasive cancer), 2 (high-risk lesion), 3 (borderline lesion), 4 (benign findings), 5 (negative/normal breast tissue), and 6 (non-breast cancer). We binarize this label by grouping categories 0 and 1 as cancer-positive and the remainder as cancer-negative, and convert it into one of the following sentences:
\begin{itemize}
    \item Cancer-negative: ``No evidence of cancer from neither screening nor biopsy.''
    \item Cancer-positive: ``Biopsy confirms the presence of cancer.''
\end{itemize}

\begin{table}[t!]\centering
\caption{LLM prompt for extracting mammography-specific findings from radiology reports. The prompt instructs the model to parse findings by laterality and breast density while discarding non-mammographic information.}
\begin{minipage}{1.0\linewidth}\vspace{0mm}    \centering
\begin{tcolorbox} 
    \centering
      \footnotesize
\begin{tabular}{p{0.97\linewidth} c}
{\bf messages} = [``role'':``{\bf system}'', ``content'': \\
f```You are an expert breast radiologist. Given a breast radiology report, extract only the key findings that are inferable from mammography (FFDM) and/or digital breast tomosynthesis (DBT). Exclude any information derived from ultrasound (US) or MRI. Your output MUST be a JSON dictionary with EXACTLY three fields: ``left'', ``right'', ``density''. \\
\\
\underline{General Rules}:\\
\emph{(1)} COPY the text EXACTLY as written --- DO NOT rephrase, summarize, or restructure sentences.\\
\emph{(2)} Include ONLY information inferable from mammography (FFDM/DBT) --- Discard any information derived from US, MRI, clinical history, or demographics.\\
\emph{(3)} You MUST NOT leave any field empty. Values in each field must be a list of sentences.\\
\\
\underline{Key-Specific Rules}:\\
\emph{left / right:}\\
\emph{(1)} DO NOT include densities.\\
\emph{(2)} Include only findings inferable from mammography (FFDM/DBT): masses, calcifications, asymmetry, architectural distortion, or explicitly stated absence of mammographic findings.\\
\emph{(3)} In the IMPRESSION section, EXCLUDE any recommendations or follow-up advice.\\
\emph{(4)} If a sentence specifies laterality (left/right), assign it exclusively to that field. If it refers to both breasts or has no laterality, copy the EXACT SAME sentence into both ``left'' and ``right''.\\
\emph{(5)} If NO mammographic findings are present, insert EXACTLY: ``No suspicious masses, calcifications or other abnormalities are seen'' as the only item in both fields.\\
\\
\emph{density:}\\
\emph{(1)} Extract from the MAMMOGRAM FINDINGS section. Must be EXACTLY one of: ``almost entirely fat'', ``Scattered fibroglandular densities'', ``Heterogeneously dense'', ``Extremely dense''.\\
\\
\underline{Here are some cases}:
(1)... (2)... (3)... (4)... \\
Organize your output as a JSON-formatted dictionary: dict[str, list[str]] with EXACTLY the three fields above, without other words.''']\\
\hrulefill & \\
{\bf messages} += [``role'':``{\bf user}'', ``content'': ``Input: \{report\}''\\
\end{tabular}
\end{tcolorbox}
\label{tab:prompt_report_parsing}
\end{minipage}
\end{table}

\begin{table}[t!]\centering
\caption{Example input--output pair for the report parsing prompt (Table~\ref{tab:prompt_report_parsing}). The LLM extracts laterality-specific findings, breast density, and BI-RADS category while discarding clinical history, recommendations, and non-mammographic information.}
\begin{minipage}{1.0\linewidth}\vspace{0mm}    \centering
\begin{tcolorbox} 
    \centering
      \footnotesize
\begin{tabular}{p{0.97\linewidth}}
\underline{\bf Input Report}:\\[2pt]
HISTORY: \\
Patient is xx years old and is seen for screening. The patient has a history of bilateral breast reduction in xxxx. \\[2pt]
FILMS COMPARED:\\ No prior imaging studies are available for comparison. \\[2pt]
MAMMOGRAM FINDINGS: \\
The following mammographic views were obtained: bilateral craniocaudal, bilateral mediolateral oblique, and bilateral tomosynthesis. Computer-aided detection was utilized by the radiologist in the interpretation of this examination. \\[2pt]
The breasts are heterogeneously dense. This may lower the sensitivity of mammography. \\[2pt]
Finding 1: There are areas of post-reduction change seen in both breasts. \\
Finding 2: There is an isodense focal asymmetry measuring 15 millimeters with spiculated margins seen in the middle region of the right breast upper outer quadrant at 10 o'clock. This is best visualized on tomosynthesis CC view slice \# 30 and MLO view slice \# 21. \\
Finding 3: There is an isodense focal asymmetry measuring 10 millimeters with spiculated margins seen in the posterior region of the right breast at 9 o'clock. \\
Finding 4: There is an isodense focal asymmetry measuring 15 millimeters with spiculated margins seen in the anterior region of the left breast upper outer quadrant at 2 o'clock. \\[2pt]
IMPRESSION: \\
Finding 1: Areas of post-reduction change in both breasts appear benign. \\
Finding 2: Focal asymmetry in the middle region of the right breast at 10 o'clock requires additional evaluation. Additional mammographic images are recommended. A possible ultrasound may be warranted following the mammographic views. \\
Finding 3: Focal asymmetry in the posterior region of the right breast at 9 o'clock requires additional evaluation. Additional mammographic images are recommended. A possible ultrasound may be warranted following the mammographic views. \\
Finding 4: Focal asymmetry in the left breast requires additional evaluation. Additional mammographic images are recommended. A possible ultrasound may be warranted following the mammographic views. \\[2pt]
BI-RADS Category 0: Additional evaluation needed. \\[4pt]
\hrulefill \\[4pt]
\underline{\bf Extracted Output}:\\[2pt]
``left'': [ \\
\quad ``There are areas of post-reduction change seen in both breasts.'', \\
\quad ``There is an isodense focal asymmetry measuring 15 millimeters with spiculated margins seen in the anterior region of the left breast upper outer quadrant at 2 o'clock.'', \\
\quad ``Areas of post-reduction change in both breasts appear benign.'', \\
\quad ``Focal asymmetry in the left breast requires additional evaluation.''] \\[2pt]
``right'': [ \\
\quad ``There are areas of post-reduction change seen in both breasts.'', \\
\quad ``There is an isodense focal asymmetry measuring 15 millimeters with spiculated margins seen in the middle region of the right breast upper outer quadrant at 10 o'clock.'', \\
\quad ``Areas of post-reduction change in both breasts appear benign.'', \\
\quad ``Focal asymmetry in the middle region of the right breast at 10 o'clock requires additional evaluation.'', \\
\quad ``Focal asymmetry in the posterior region of the right breast at 9 o'clock requires additional evaluation.''] \\[2pt]
``density'': [``The breasts are heterogeneously dense. This may lower the sensitivity of mammography.''] \\[2pt]
\end{tabular}
\end{tcolorbox}
\label{tab:prompt_example}
\end{minipage}
\end{table}

\begin{table}[h!]
\centering
\footnotesize
\caption{Zero-shot prompts for \textbf{density} classification.}
\label{tab:zs_density}
\begin{tabular}{lp{10cm}}
\toprule
\textbf{Model} & \textbf{Prompts (Density 1 - 4) } \\
\midrule
TopKSigLIP/MII & ``The breast is almost entirely fatty.'', ``The breast is scattered with fibroglandular densities.'',``The breast is heterogeneously dense.'', ``The breast is extremely dense.''
 \\
\midrule
MaMA/GLAM & ``Breast composition: The breast is almost entirely fat.'', ``Breast composition: The breast is scattered fibroglandular densities. '', ``Breast composition: The breast is heterogeneously dense. This may lower the sensitivity of mammography. '', ``Breast composition: The breast is extremely dense. This may lower the sensitivity of mammography. ''
 \\ 
\midrule
Mammo-CLIP B2/B5 & ``the breasts being almost entirely fatty'', ``scattered areas of fibroglandular density'', ``the breast tissue is heterogeneously dense'', ``the breasts are extremely dense'' \\
\bottomrule
\end{tabular}
\end{table}

\begin{table}[h!]
\centering
\footnotesize
\caption{Zero-shot prompts for \textbf{BI-RADS} classification.}
\label{tab:zs_birads}
\begin{tabular}{lp{10cm}}
\toprule
\textbf{Model} & \textbf{Prompts (BIRADS 1-5)} \\
\midrule
TopKSigLIP/MII &  ``BIRADS 1: N – Negative'', ``BIRADS 2: B - Benign'', ``BIRADS 3: P – Probably benign'', ``BIRADS 4: S – Suspicious'', ``BIRADS 5: M - Highly suggestive of malignancy'' \\
\midrule
MaMA/GLAM &  ``Findings: The mammogram shows that no significant masses, calcification, or other abnormalities are present. '', ``Findings: The mammogram shows that a benign finding is present. '', ``Findings: The mammogram shows that a probably benign finding is present. '', ``Findings: The mammogram shows that a suspicious abnormality is present. '', ``Findings: The mammogram shows that a highly suggestive of malignancy is present, a biopsy is recommended. '' \\ 
\midrule
Mammo-CLIP B2/B5 &  ``birads category 1'', ``birads category 2'', ``birads category 3'', ``birads category 4'', ``birads category 5'' \\
\bottomrule
\end{tabular}
\end{table}

\begin{table}[h!]
\centering
\footnotesize
\caption{Zero-shot prompts for \textbf{cancer outcome} prediction.}
\label{tab:zs_cancer}
\begin{tabular}{lp{10cm}}
\toprule
\textbf{Model} & \textbf{Prompts (negative / positive)} \\
\midrule
TopKSigLIP & ``No evidence of cancer from neither screening nor biopsy.'' / ``Biopsy confirms the presence of cancer.''
 \\
\midrule
MII & ``normal'' /``cancer'' \\
\midrule
MaMA / GLAM & ``Findings: The mammogram shows that Cancer negative: overall healthy or just benign finding'' / ``Findings: The mammogram shows that Cancer positive: screening image with known biopsy-proven malignancy or suspicious abnormality found'' \\
\midrule
Mammo-CLIP B2/B5 & ``no malignancy'' / ``malignancy'' \\
\bottomrule
\end{tabular}
\end{table}

\begin{table}[h!]
\centering
\footnotesize
\caption{Zero-shot prompts for \textbf{finding type}. Each finding is evaluated as an independent binary classification with a [negative, positive] prompt pair.}
\label{tab:zs_finding}
\begin{tabular}{llp{9cm}}
\toprule
\textbf{Model} & \textbf{Finding} & \textbf{Prompts [negative, positive]} \\
\midrule
\multirow{4}{*}{TopKSigLIP} 
& MASS & [``No suspicious masses, calcifications or other abnormalities are seen.'', ``there is a mass seen in the breast.''] \\
& CALC & [``No suspicious masses, calcifications or other abnormalities are seen.'', ``there are calcifications seen in the breast.''] \\
& AD & [``No suspicious masses, calcifications or other abnormalities are seen.'', ``there is an architectural distortion seen in the breast.''] \\
& AS & [``No suspicious masses, calcifications or other abnormalities are seen.'', ``there is an asymmetry seen in the breast.''] \\
\midrule
\multirow{4}{*}{MII/ MaMA / GLAM} 
& MASS & [``no suspicious mass'', ``suspicious mass''] \\
& CALC & [``no suspicious calcifications'', ``suspicious calcification''] \\
& AD & [``no suspicious architectural distortions'', ``architectural distortion''] \\
& AS & [``no suspicious asymmetries'', ``asymmetry''] \\
\midrule
\multirow{4}{*}{Mammo-CLIP B2/B5} 
& MASS & [``no mass'', ``mass''] \\
& CALC & [``no suspicious calcification'', ``suspicious calcification''] \\
& AD & [``no suspicious architectural distortions'', ``architectural distortion''] \\
& AS & [``no suspicious asymmetries'', ``asymmetry''] \\
\bottomrule
\end{tabular}
\end{table}

\subsection{Evaluation protocol}
\label{sec:a_evaluation_protocol}

\subsubsection{Zero-shot evaluation}
\label{sec:a_evaluation_protocol_zero_shot}
For zero-shot evaluation, we adopt each model's published prompt template where available. For models that do not provide prompts for certain tasks, we design prompts that best align with the model's training vocabulary. Tables~\ref{tab:zs_density}--\ref{tab:zs_finding} summarize the prompts used for each model across all four tasks.

\begin{figure}[t]
     \centering
         \includegraphics[width=0.99\textwidth]{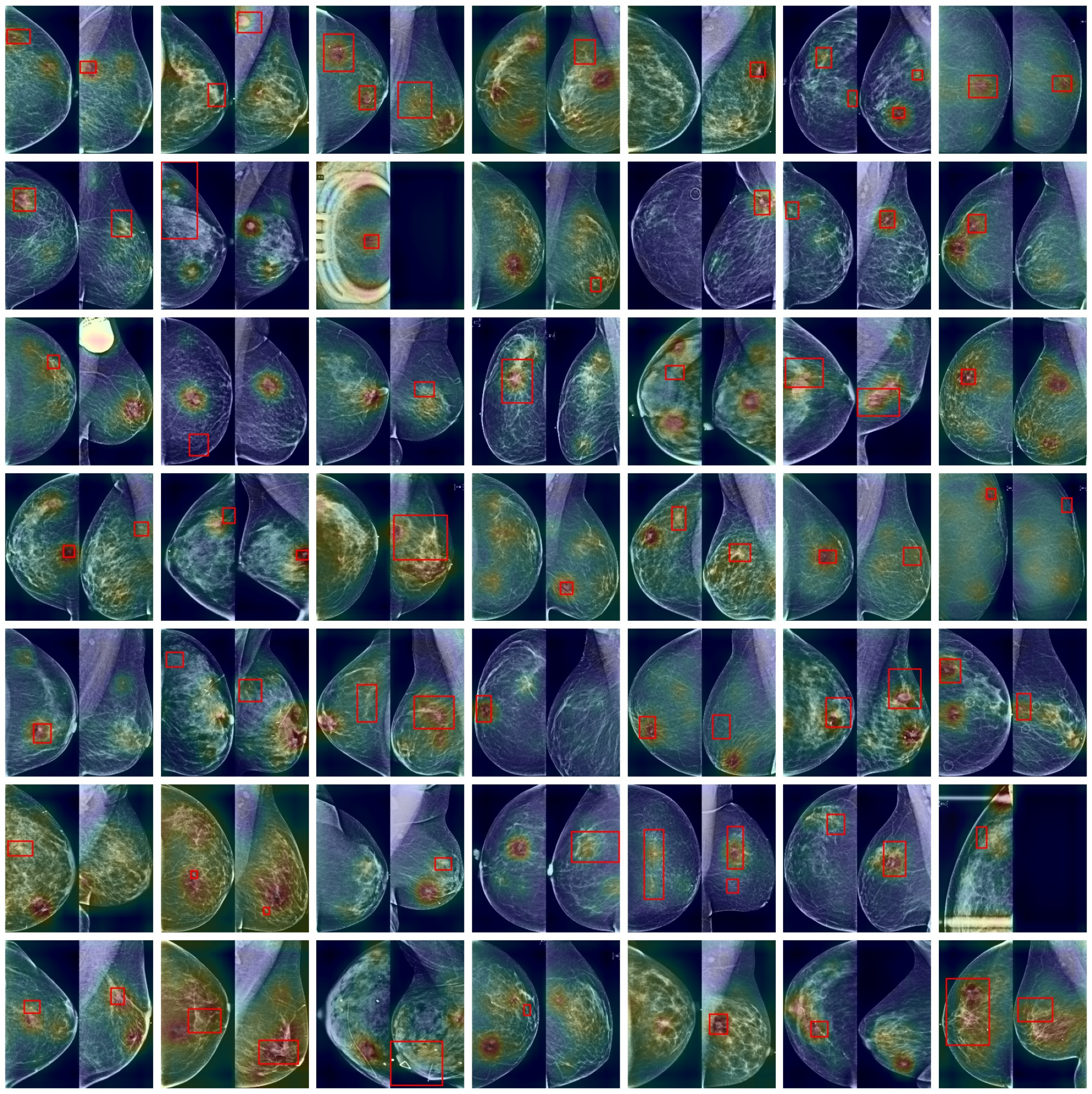}
\caption{\textbf{Localization visualization on random samples from EMBED.}}
\label{fig:a_vis_embed}
\end{figure}

\begin{figure}[t]
     \centering
         \includegraphics[width=0.99\textwidth]{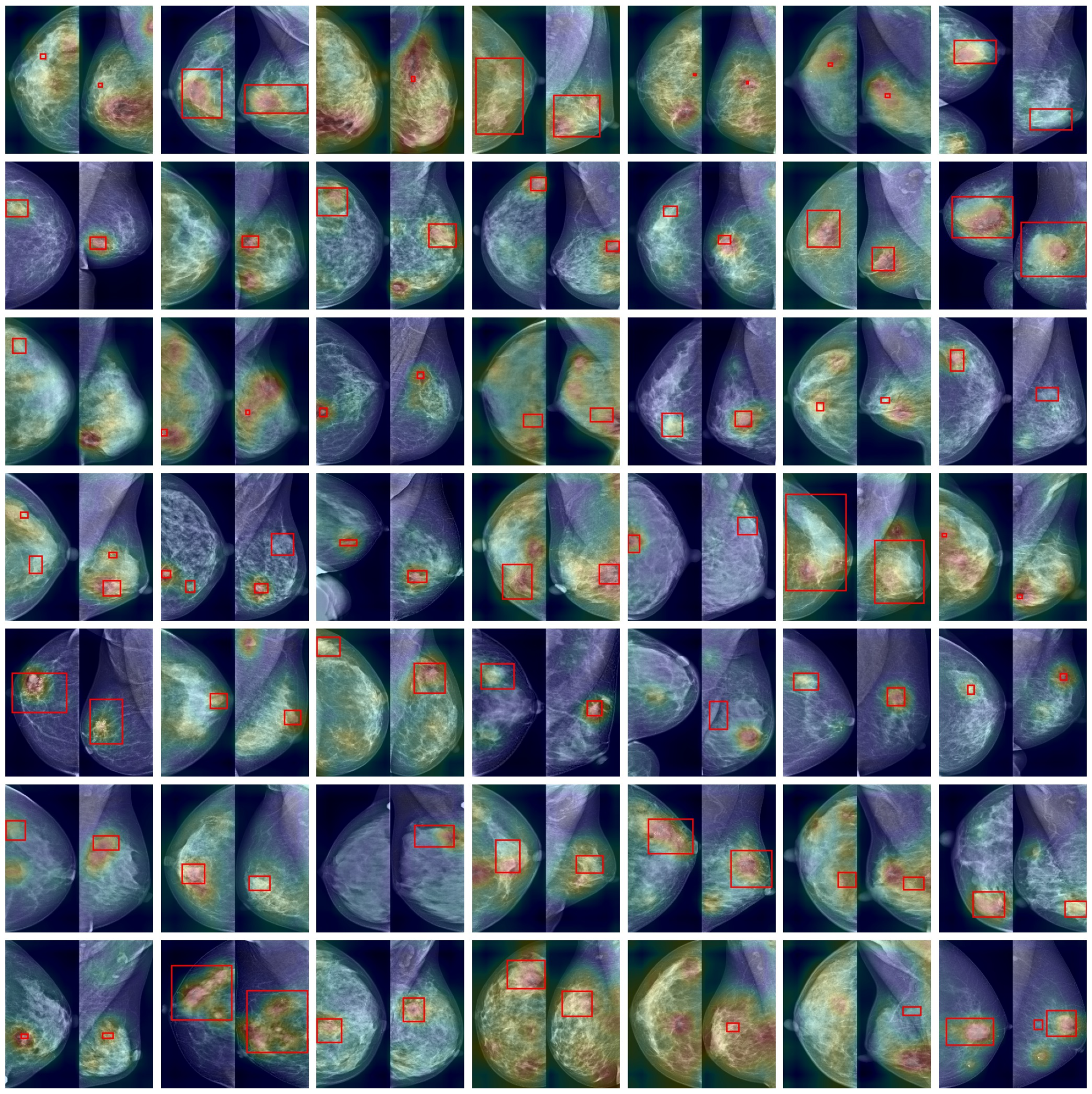}
\caption{\textbf{Localization visualization on random samples from VinDr.}}
\label{fig:a_vis_vindr}
\end{figure}

\subsubsection{Localization evaluation}
\label{sec:a_localization_evaluation}

As described in the main text, we use \textit{Oracle Grad-CAM} rather than standard Grad-CAM for the baseline models. Standard Grad-CAM uses the gradient of the model's own predicted logit; however, baseline models performed too poorly under this setting to yield meaningful heatmaps. We therefore provide an advantage to the baselines by supplying the ground-truth finding type, cancer outcome, and BI-RADS score when generating the final logit.

For zero-shot \textit{Oracle Grad-CAM}, each ground-truth label is converted into a natural language prompt following the zero-shot prompt protocol described in Sec.~\ref{sec:a_evaluation_protocol_zero_shot}. For instance, for MaMA, a sample with a mass finding, confirmed cancer, and BI-RADS 5 assessment would yield the prompt: \texttt{``suspicious mass. Findings: The mammogram shows that a finding highly suggestive of malignancy is present, a biopsy is recommended. Findings: The mammogram shows that cancer positive: screening image with known biopsy-proven malignancy or suspicious abnormality found.''} The ground-truth logit is then computed as the inner product between the image embedding and the text embedding of the constructed prompt. Gradients of this logit with respect to the image encoder's feature map prior to spatial pooling are used to produce the final heatmap.

For linear probe \textit{Oracle Grad-CAM}, we use the logits from the finding, cancer, and BI-RADS heads. For the finding head (multi-label), the ground-truth binary label vector masks the logits so that only positive findings contribute, and the masked logits are summed across classes. For the cancer and BI-RADS heads (multi-class), the ground-truth class index directly selects the corresponding logit. The three resulting per-sample scalars are summed to form a single aggregated logit.

\begin{table}
\caption{Localization performance (AP/PG) with Grad-CAM and NormGrad (no label supervision) on linear probe model. \textit{Oracle Grad-CAM} results are also shown for reference.}
\label{tab:a_grad-cam}
\definecolor{datasetgray}{gray}{0.90}
\small
\resizebox{\textwidth}{!}{
\begin{tabular}{l|cccc c|cccc c}
\toprule
& \multicolumn{5}{c|}{\textbf{EMBED}} & \multicolumn{5}{c}{\textbf{VinDr}} \\
\cmidrule(lr){2-6} \cmidrule(lr){7-11}
& BI-RADS & Cancer & Finding & All & All (Oracle) & BI-RADS & Cancer & Finding & All & All (Oracle) \\
\midrule
\rowcolor{gray!20} & \multicolumn{10}{c}{\textbf{Grad-CAM}} \\
\midrule
MedImageInsight  & 3.8 / 8.5 & 3.3 / 9.1 & 7.4 / 15.8 & 9.6 / 14.1 & 8.0 / 12.7 & 13.8 / 28.5 & 4.7 / 7.6 & 24.1 / 44.1 & 25.3 / 41.7 & 28.8 / 46.2 \\
MammoCLIP-B2     & 3.7 / 8.7 & 3.7 / 2.3 & 4.9 / 12.9 & 6.9 / 11.6 & 4.9 / 11.6 & 10.5 / 18.8 & 6.3 / 7.6 & 16.1 / 31.3 & 16.8 / 27.1 & 20.8 / 37.5 \\
MammoCLIP-B5     & 3.4 / 10.4 & 3.2 / 20.8 & 4.5 / 13.7 & 6.6 / 11.9 & 5.9 / 11.3 & 10.6 /19.4 & 16.8 / 39.6 & 16.5 / 29.2 & 17.6 / 26.4 & 21.6 / 34.0 \\
MaMA             & 3.1 / 4.0 & 4.3 / 2.9 & 4.3 / 3.6 & 4.8 / 3.4 & 5.7 / 3.0 & 6.3 /7.6  & 8.2 / 11.5 & 8.2 / 9.4 & 8.3 / 9.7 & 9.4 / 11.1 \\
\midrule
\rowcolor{gray!20} & \multicolumn{10}{c}{\textbf{NormGrad}} \\

\midrule
MedImageInsight & 6.1 / 8.4 & 6.5 / 7.3 & 6.3 / 8.6 & 5.9 / 8.6 & 5.9 / 8.2 & 9.4 / 12.5 & 10.2 / 8.0 & 13.5 / 13.5 & 14.2 / 12.5 & 15.6 / 14.6 \\
MammoCLIP-B2     & 6.6 / 6.2 & 6.6 / 6.4 & 6.4 / 5.7 & 5.9 / 5.4 & 6.2 / 5.0 & 16.2 / 15.6 & 14.1 / 13.5 & 19.2 / 11.5 & 19.9 / 12.8 & 19.9 / 13.9 \\
MammoCLIP-B5     & 7.1 / 7.4 & 7.2 / 6.2 & 6.7 / 4.8 & 6.6 / 5.3 & 6.4 / 5.2 & 16.5 / 12.2 & 13.8 / 10.1 & 19.4 / 8.7 & 19.7 / 6.6 & 19.4 / 10.4 \\
MaMA             & 3.3 / 4.1 & 2.8 / 2.4 & 3.1 / 2.6 & 3.4 / 2.9 & 3.5 / 3.0 & 4.5 / 7.3 & 4.3 / 6.9 & 5.3 / 5.9 & 7.2 / 5.2 & 7.7 / 5.6 \\
\bottomrule
\end{tabular}
}
\end{table}

\subsection{Grad-CAM and NormGrad on linear probe}
\label{sec:a_grad_cam_on_linear}
Tab.~\ref{tab:a_grad-cam} shows the localization results of baselines in linear probe setting without relying on the ground truth. The gradients are thus computed on the highest logit value. Also, we vary the combination of logit heads in generating the heatmap. 

\subsection{Proof: Differentiable Top-$k$ as Exact Sampling from the Plackett--Luce Distribution}
\label{sec:proof_top_k}
Given importance scores $\mathbf{w} \in \mathbb{R}_{\geq 0}^{N}$, sampling $K$ items follows the Plackett--Luce distribution~\cite{plackett1975analysis}:
\begin{equation}
    p(E \mid \mathbf{w}) = 
    \frac{w_{k_1}}{Z} \cdot \frac{w_{k_2}}{Z - w_{k_1}} \cdots 
    \frac{w_{k_K}}{Z - \sum_{j=1}^{K-1} w_{k_j}},
\end{equation}
where $Z = \sum_{i=1}^{N} w_i$. TopK-Patch adopts the Gumbel-top-$K$ reparameterization~\cite{xie2019reparameterizable}, which perturbs log-weights with Gumbel noise, $\hat{r}_i = \log(w_i) + g_i$ where $g_i \sim \mathrm{Gumbel}(0,1)$, and applies the iterative softmax relaxation~\cite{plotz2018neural} (Eq.~\ref{eq:softopk}). We show that as $\tau \to 0$, this yields exact samples from $p(E \mid \mathbf{w})$.

The proof proceeds in two parts. First, we show that taking the hard top-$K$ of Gumbel-perturbed log-weights yields exact Plackett--Luce samples, following the connection established by Vieira~\cite{vieira2014gumbel} and Xie and Ermon~\cite{xie2019reparameterizable}. Second, we show that the iterative softmax relaxation recovers this hard top-$K$ as $\tau \to 0$.

\paragraph{Part 1: Gumbel-top-$K$ equivalence to WRS.}
The proof relies on the weighted reservoir sampling (WRS) algorithm of Efraimidis and Spirakis~\cite{efraimidis2006weighted}, which generates random keys $r_i = u_i^{1/w_i}$ where $u_i \sim \mathrm{Uniform}(0,1)$, and returns the indices of the top-$K$ keys. Efraimidis and Spirakis~\cite{efraimidis2006weighted} proved that the output of WRS is distributed exactly according to $p(E \mid \mathbf{w})$.

\begin{proposition}
$\mathrm{TopK}(\hat{\mathbf{r}}, K) = \mathrm{TopK}(\mathbf{r}, K)$ almost surely when using the same source of randomness.
\end{proposition}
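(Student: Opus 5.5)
The plan is to couple the two constructions through a single uniform sample and show that the two sets of keys differ only by a strictly increasing transformation. Such a map preserves order, so it preserves the top-$K$ indices.

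\emph{Coupling.} Draw $u_i \sim \mathrm{Uniform}(0,1)$ independently for $i=1,\dots,N$. Set $g_i = -\log(-\log u_i)$; by inverse-CDF sampling, $g_i \sim \mathrm{Gumbel}(0,1)$. Feed the same $u_i$ into the WRS keys $r_i = u_i^{1/w_i}$. Restrict first to indices with $w_i>0$. (If $w_i=0$, then $\hat r_i=-\infty$ and $r_i=0$, so such indices rank last under both schemes. This is consistent as long as $K$ does not exceed the number of positive weights, which I will assume, as Eq.~\eqref{eq:wrs} implicitly does.)

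\emph{Monotone transformation.} For $w_i>0$, compute
\[
\log r_i = \frac{\log u_i}{w_i}, \qquad -\log(-\log r_i) = -\log(-\log u_i) + \log w_i = g_i + \log w_i = \hat r_i .
\]
So $\hat r_i = \phi(r_i)$ with $\phi(x) = -\log(-\log x)$, which is strictly increasing on $(0,1)$. Both $x\mapsto \log x$ and $y\mapsto -\log(-y)$ are strictly increasing there, and $r_i\in(0,1)$ almost surely. Hence $r_i > r_j$ if and only if $\hat r_i > \hat r_j$ for all pairs $i,j$. The orderings of $\mathbf r$ and $\hat{\mathbf r}$ coincide, so $\mathrm{TopK}(\hat{\mathbf r},K) = \mathrm{TopK}(\mathbf r,K)$, as ordered index sequences and not merely as sets.

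\emph{Almost-sure qualifier and conclusion.} The only delicate point is showing that $\mathrm{TopK}$ is well defined, i.e., that ties occur with probability zero. The $\hat r_i$ are independent, and each has a continuous (Gumbel, shifted) distribution, so $P(\hat r_i=\hat r_j)=0$ for $i\neq j$. A finite union over pairs is still null, and the event $u_i\in\{0,1\}$ is null as well. Outside this null set the equality holds pointwise, which gives the "almost surely" in the statement. Combining with the Efraimidis–Spirakis result quoted just above then shows that the Gumbel-top-$K$ output is distributed exactly as $p(E\mid\mathbf w)$. I expect no real obstacle here beyond this bookkeeping of zero weights and ties.
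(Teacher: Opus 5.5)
Your proposal is correct and follows the paper's proof: you couple both schemes through the same $u_i$, verify $\hat r_i = -\log(-\log r_i)$, and use the strict monotonicity of $x\mapsto -\log(-\log x)$ on $(0,1)$ to conclude that the orderings, and hence the top-$K$ indices, agree. Your extra bookkeeping goes slightly beyond the paper's proof. You note that ties form a null event, and you treat zero weights, where the paper's step $-\log(-\log(u_i)/w_i)$ is undefined and the claim needs $K$ to be at most the number of positive weights; the paper omits both points.
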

\begin{proof}
A $\mathrm{Gumbel}(0,1)$ variate can be generated as $g_i = -\log(-\log(u_i))$ where $u_i \sim \mathrm{Uniform}(0,1)$. Substituting:
\begin{equation}
    \hat{r}_i = \log(w_i) - \log(-\log(u_i)) = -\log\!\left(-\tfrac{1}{w_i}\log(u_i)\right) = -\log(-\log(u_i^{1/w_i})) = -\log(-\log(r_i)).
\end{equation}
Since $x \mapsto -\log(-\log(x))$ is strictly monotonic on $(0,1)$, the transformation preserves ordering: $\hat{r}_i \leq \hat{r}_j \iff r_i \leq r_j$. Therefore $\mathrm{TopK}(\hat{\mathbf{r}}, K) = \mathrm{TopK}(\mathbf{r}, K)$, and taking the top-$K$ elements of $\hat{\mathbf{r}}$ yields exact samples from $p(E \mid \mathbf{w})$.
\end{proof}

\paragraph{Part 2: Iterative softmax recovers hard top-$K$ as $\tau \to 0$.}
At step $j=1$, as $\tau \to 0$ the softmax converges to a hard argmax, so $e_i^{(k_1)} \to \mathbf{1}[i = k_1^*]$ where $k_1^* = \arg\max_i \hat{r}_i$. The update rule then gives:
\begin{equation}
    \alpha_i^{(k_2)} = \hat{r}_i + \log(1 - e_i^{(k_1)}) = 
    \begin{cases}
        -\infty & \text{if } i = k_1^*, \\
        \hat{r}_i & \text{otherwise},
    \end{cases}
\end{equation}
which removes $k_1^*$ from consideration. At step $j=2$, the softmax selects $k_2^* = \arg\max_{i \neq k_1^*} \hat{r}_i$, and the update again sends $\alpha_{k_2^*}$ to $-\infty$. By induction, after $K$ steps the selected indices $\{k_1^*, \ldots, k_K^*\}$ are the $K$ largest elements of $\hat{\mathbf{r}}$, recovering the hard top-$K$ operation. Combined with Part 1, this establishes that as $\tau \to 0$, the iterative softmax relaxation yields exact samples from $p(E \mid \mathbf{w})$.

\end{document}